\documentclass{article}

\usepackage[margin=1in]{geometry}
\usepackage[round]{natbib}

\usepackage[utf8]{inputenc}
\usepackage[T1]{fontenc}
\usepackage{amsmath,amssymb,amsfonts,amsthm}
\usepackage{mathtools}
\usepackage{graphicx}
\usepackage{booktabs}
\usepackage{microtype}
\usepackage{xcolor}
\usepackage{url}
\usepackage{algorithm}
\usepackage{algpseudocode}
\usepackage{hyperref}
\renewcommand{\And}{\and}

\def\REV#1{{\color[HTML]{000000}#1}}

\newenvironment{REV*}
  {\begingroup\color[HTML]{000000}}
  {\endgroup}

\newtheorem{theorem}{Theorem}
\newtheorem{proposition}[theorem]{Proposition}
\newtheorem{lemma}[theorem]{Lemma}
\newtheorem{corollary}[theorem]{Corollary}
\newtheorem{assumption}[theorem]{Assumption}
\theoremstyle{definition}
\newtheorem{remark}[theorem]{Remark}

\newcommand{\R}{\mathbb{R}}
\newcommand{\C}{\mathbb{C}}
\newcommand{\N}{\mathbb{N}}
\newcommand{\din}{d_{\mathrm{in}}}
\newcommand{\dout}{d_{\mathrm{out}}}
\DeclareMathOperator{\rank}{rank}
\let\Im\relax
\DeclareMathOperator{\Im}{Im}
\DeclareMathOperator{\dist}{dist}

\title{Width-Independent Compressibility of Deep Neural Networks}

\author{Hong-Yi Wang \\
Princeton University \\
Princeton, NJ 08544 \\
\texttt{hywang@princeton.edu}
\And
Mingze Wang \\
Peking University \\
Beijing, China 100871 \\
\texttt{mingzewang.math@gmail.com}
\And
Liu Ziyin \\
Massachusetts Institute of Technology \\
Cambridge, MA 02139 \\
\texttt{ziyinl@mit.edu}
}

\begin{document}

\maketitle

\begin{abstract}
It has long been known that well-trained neural networks can be compressed very strongly without affecting their performance, an important phenomenon that remains poorly understood. We prove a uniform compressibility theorem for deep multilayer perceptrons with analytic activations. For a deep, wide \REV{fixed teacher network}, there exists a narrow (same depth) network that approximately represents the same function as the original. The reachable compressed width is strikingly independent of the original width, but is $O((\log(1/\varepsilon))^{d_{\mathrm{in}}})$, where $\varepsilon$ is the error budget and $d_{\mathrm{in}}$ is the \REV{effective input dimension}. Our construction involves a novel derivative-matching technique which is aware of the low-dimensional input, and a layer-wise reweighting that preserves the input-output mapping.
\end{abstract}

\section{Introduction}
\label{sec:intro}

Modern neural networks often behave as though only a fraction of their parameters are essential. At language-model scale, weights can be quantized to four bits, matrices can be narrowed by removing rows and columns, and entire layers can be dropped while largely preserving model capabilities \citep{frantar2023gptq,ashkboos2024slicegpt,men2024shortgpt,gromov2025unreasonable}. That such different interventions succeed points to a common phenomenon: the function learned by a wide network may admit a much smaller representation.

This observation raises a basic theoretical question. Given a particular trained deep network, can we construct a narrow network that approximates the same function \emph{uniformly over a bounded input domain}? If so, how does the size of the compressed network scale if the allowed error is limited? Such a guarantee would turn empirical redundancy into a property of the represented function itself.

Existing theory does not yet provide this combination. Compression guarantees typically control generalization, empirical error, or error under a calibration distribution, and often rely on data-dependent pruning or low-rank structure \citep{arora2018stronger,baykal2019data,liebenwein2020provable,zhang2025theoretical}. Uniform compression theorems for fixed networks are available only in more restricted settings, such as two-layer permutation-symmetric networks \citep{wang2026compression}. What is missing is a constructive, data-free sup-norm guarantee for a fixed deep MLP that composes across layers with controlled error.

We close this gap for analytic-activation deep MLPs. Fix a layer $\ell$ and write $f$ for the input-to-layer-($\ell{-}1$) map. The contribution of layer $\ell$ to the next pre-activation is
\begin{equation}
    y(x)=\sum_{i=1}^{d_\ell} v_i\,\sigma(w_i^\top f(x)),
    \label{eq:intro-decomp}
\end{equation}
a finite sum over the $d_\ell$ neurons of layer~$\ell$. If $f$ and $\sigma$ extend holomorphically to suitable complex neighborhoods, every summand is a holomorphic function of the input $x$ on a complex polydisk, and Cauchy's inequality converts finite-order \emph{input-derivative matching at the origin} into a uniform sup-norm bound on a bounded real domain $\Omega\subset\R^\din$ of radius $R$: matching all derivatives $\partial_x^\alpha y(0)$ for $|\alpha|\le k$ controls the Taylor remainder by a geometric factor $(R/\rho_0)^{k+1}$, where $\rho_0$ is determined by the activation's holomorphy radius. The matching condition is finite-dimensional: it imposes only $D_k=\binom{\din+k}{\din}$ scalar constraints per output coordinate. Therefore, a deterministic rank-reduction procedure on the derivative-feature matrix retains at most $D_k$ of the original neurons and recomputes the outgoing weights.

Solving for the smallest $k$ that meets a target accuracy $\varepsilon_\ell$ gives a compressed width
\begin{equation}
    d_{\ell}' = O\!\left(\frac{1}{\din!}
    \left(\frac{\log(2C_{0}/\varepsilon_\ell)}{\lambda}\right)^{\din}\right),
    \label{eq:intro-width}
\end{equation}
\begin{REV*}
where $\lambda$ is determined by the ratio between the input radius and the holomorphy radius, and $C_0$ is determined by the teacher network---mostly the range of weights. Eq.~\eqref{eq:intro-width} is width-independent in the sense that: after these teacher-specific constants are evaluated, the cap $D_{k_\ell^\star}$ has no additional explicit $d_\ell$ factor.

Composing layer-wise compressions in backward order keeps the analytic constants of the layer currently being compressed bounded by their original-teacher values, while downstream Lipschitz factors are absorbed by a uniform per-step error budget. This produces a same-depth compressibility theorem for the fixed teacher, with the per-layer bounds expressed similar to Eq.~\eqref{eq:intro-width} with replaced constants.
\end{REV*}

Conceptually, our theory establishes the following rather surprising idea:
\begin{quote}
    \textit{For a broad class of models (specified in the assumptions), the effective complexity of neural networks solely depends on the data dimension. }
\end{quote}

Our technical contributions that lead to this result are:
\begin{itemize}
    \item \REV{A fixed-teacher deep-MLP compression theorem (Theorem~\ref{thm:deep-comp-informal}): backward layer-wise compression of all $L-1$ hidden layers yields a uniformly accurate, same-depth network. For a fixed teacher network, its width caps are polylogarithmic in $1/\varepsilon$ with no additional explicit original-width factor.}
    \item The per-layer derivative-matching error bound that powers it (Theorem~\ref{thm:step1}): matching all input derivatives at the origin up to order $k$ controls the Taylor remainder by a geometric factor $(R/\rho_0)^{k+1}$ in terms of holomorphic radii of $f$ and $\sigma$.
    \item An explicit rank-reduction procedure (Algorithm~\ref{alg:compression}) that selects at most $\binom{\din+k}{\din}$ original neurons and recomputes outgoing weights while controlling the error from the network's input to output.
\end{itemize}

The remainder of this paper is organized as follows. Section~\ref{sec:setup} fixes notation, states the analyticity and weight-bound assumptions on the network, and formulates the per-layer compression problem. Section~\ref{sec:related} situates the result within approximation theory, depth--width separations, empirical and theoretical compression, and tensor-moment methods. Section~\ref{sec:proof-idea} presents the proof idea for compressing a single intermediate layer, building from a measure-compression reformulation through derivative matching to the rank-reduction step, and culminates in the single-layer compression theorem (\REV{Theorem~\ref{thm:main}}). Section~\ref{sec:deep} composes the single-layer result across all hidden layers of a deep MLP, derives the polylogarithmic-width deep theorem (Theorem~\ref{thm:deep-comp-informal}), and reports numerical experiments on a four-hidden-layer network. Section~\ref{sec:discussion} discusses conceptual implications, limitations, and open directions. \REV{Full proofs and supporting technical results are deferred to the appendix.}

\section{Problem setup}
\label{sec:setup}

Consider an $L$-layer neural network
\begin{equation}
    h^{(0)}=x,\qquad
    h^{(\ell)}=\sigma\!\bigl(W^{(\ell)}h^{(\ell-1)}\bigr)\quad(\ell=1,\dots,L-1),
    \qquad
    y_{\mathrm{out}}=W^{(L)}h^{(L-1)}\in\R^{\dout},
    \label{eq:mlp}
\end{equation}
where $x\in\Omega$ and $\Omega$ is a bounded subset of $\R^{\din}$, and $R\coloneqq\sup_{x\in\Omega}\|x\|$. Without loss of generality, we assume $0\in\Omega$. The dimension of the $\ell$th layer (i.e., width) is $d_\ell$, and in particular we denote $d_0=\din$ and $d_L=\dout$. Usually, there is a bias term $b^{(\ell)}$ for each layer in Eq.~\eqref{eq:mlp}. Here, we note that $b^{(\ell)}$ can be absorbed into $W^{(\ell)}$ (for notation simplicity) by appending a constant coordinate equal to $1$ to each hidden layer, converting affine maps into linear maps. 

We work in the regime $\din = O(1)$ and $d_1,\dots,d_{L-1} \gg 1$. We aim to compress the hidden layers while approximately preserving the input-to-output mapping. We write $F:\Omega\to\R^{\dout}$ for the input-to-output map of the original network. \REV{The goal is to construct a same-depth approximation $\widetilde F$ satisfying $\sup_{x\in\Omega}\|\widetilde F(x)-F(x)\|_\infty\le\varepsilon$ for a given error budget $\varepsilon>0$, with retained widths controlled by accuracy-driven dimension bounds.}

\REV{Throughout, $F$ is one fixed, finite teacher network. Unless uniform control of those quantities is imposed separately, our theorems do not assert a common retained-width bound for a family of teachers whose original widths vary.}

In what follows, we will first show the compression of a single intermediate layer, say the $\ell$-th layer, given an error budget $\varepsilon_\ell$, and then compose the compressions of all hidden layers and discuss the error budget assignment across layers. 

\begin{REV*}
\paragraph{Effective input dimension.}
For the layer under consideration, write $f=h^{(\ell-1)}$ for its upstream map. We call $m$ an \emph{admissible effective input dimension} if, after translating coordinates if necessary, there are a bounded set $U\subset\R^m$ with $0\in U$, a map $\pi:\Omega\to U$, and a map $\bar f:U\to\R^{d_{\ell-1}}$ such that $f=\bar f\circ\pi$, and the transformed problem obtained by replacing $(\Omega,x,\din,f)$ with $(U,u,m,\bar f)$ satisfies Assumptions~\ref{asm:holo}--\ref{asm:R-rho}. We denote the smallest admissible value by $m_{\mathrm{eff},\ell}$. The ambient choice $m=\din$, $U=\Omega$, and $\pi(x)=x$ is always the baseline; a smaller value requires an exact factorization together with the stated analytic and radius conditions in the reduced coordinates.
\end{REV*}

We will use the following assumptions, mostly for bounding the Taylor expansion remainder. $\ell\in [L-1]$ is any hidden layer, and $f:\Omega\to\R^{d_{\ell-1}}$ is the input-to-layer-($\ell{-}1$) map, i.e., $f(x)=h^{(\ell-1)}$.

\begin{assumption}[Holomorphy of $f$ and $\sigma$]
\label{asm:holo}
\leavevmode
\begin{itemize}
\item $f$ extends holomorphically to a complex ball $B_\C(0,\rho_f)\subset\C^\din$. For $0<r<\rho_f$, define
\begin{equation}
    L_f(r)\coloneqq\sup_{\|z\|\le r}\|Jf(z)\|_{\mathrm{op}}<\infty.
    \label{eq:Lf-def}
\end{equation}
\item $\sigma:\R\to\R$ is real analytic. Let $\mathcal D_\sigma\subset\C$ be the domain of its maximal holomorphic extension and, with $W_{\max}^{(\ell)}$ as in Assumption~\ref{asm:Wmax} below, define
\begin{equation}
    \rho_\sigma\coloneqq
    \inf_{\|w\|\le W_{\max}^{(\ell)}}
    \dist\!\bigl(w^\top f(0),\C\setminus\mathcal D_\sigma\bigr)>0.
    \label{eq:rhosigma}
\end{equation}
\end{itemize}
\end{assumption}
Real analytic activation functions are common, e.g., sigmoid, tanh, and GELU. In particular, GELU is holomorphic on the entire $\C$, so we expect our error bounds to work for any size of $\Omega$. This assumption does not hold for ReLU. \REV{Extending the construction to ReLU would require a separate approximation argument and is left for future work.}

\begin{assumption}[Per-layer weight bound]
\label{asm:Wmax}
The incoming weights at layer $\ell$ satisfy
\begin{equation}
    W_{\max}^{(\ell)}\coloneqq\max_{i\in[d_\ell]}\|w_i^{(\ell)}\|<\infty.
    \label{eq:Wmax-def}
\end{equation}
\end{assumption}

\begin{assumption}[Outgoing $\ell_1$-norm bound]
\label{asm:B2}
The outgoing weights at layer $\ell$ satisfy
\begin{equation}
    V_{\max}\coloneqq\max_{j\in[d_{\ell+1}]}\|W^{(\ell+1)}_{j,:}\|_1<\infty.
    \label{eq:Vmax-def}
\end{equation}
\end{assumption}

The effective holomorphic radius of $g(\cdot, w)$ is defined as
\begin{equation}
    \rho\coloneqq\sup\Bigl\{r\in(0,\rho_f):r\,W_{\max}^{(\ell)}\,L_f(r)<\rho_\sigma\Bigr\}.
    \label{eq:rho-eff}
\end{equation}

\begin{assumption}[Error-budget compatibility]
\label{asm:R-rho}
$R<\rho$.
\end{assumption}
Assumption~\ref{asm:R-rho} ensures that the MLP function from the input ($\forall x\in\Omega$) to $h^{(\ell+1)}$ is analytic.

\section{Related works}
\label{sec:related}

\paragraph{Approximation theory.}
Classical universal approximation theorems show that shallow networks of sufficient width approximate continuous functions on compact domains \citep{cybenko1989approximation,hornik1989multilayer,hornik1991approximation,pinkus1999approximation}. Quantitative refinements give explicit rates for Barron-type two-layer networks \citep{barron1993universal,weinan2021barron}, smooth and analytic targets \citep{mhaskar1996neural}, and shallow networks with $\mathrm{ReLU}^k$ activations \citep{siegel2020approximation,siegel2022high,mao2024approximation}; \citet{devore2021approximation} survey the approximation theory literature, including matching parameter-count bounds for Sobolev/Besov targets and the role of unstable parameterization. These works approximate an external target class. \REV{We instead approximate a fixed teacher network uniformly on a bounded domain, with an accuracy-driven retained-width bound.}
\REV{Kolmogorov-width and metric-entropy asymptotics for analytic function balls strongly indicate that the resulting $[\log(1/\varepsilon)]^m$ dependence, with $m$ the effective input dimension, is not an artifact of our algorithm but has information-theoretic grounds; Appendix~\ref{app:info-theory} makes this comparison precise.}

\paragraph{Depth, width, and smoothness.}
Depth improves approximation in Sobolev and related smoothness classes \citep{yarotsky2017error,yarotsky2018optimal,lu2021deep,schmidt2020nonparametric,liu2026smoothness}, while \emph{depth--width separations} show that certain functions require exponential width when depth is fixed \citep{telgarsky2016benefits,eldan2016power}, and that bounded-width networks recover universality only by growing depth \citep{lu2017expressive,kidger2020universal,hanin2019universal,vardi2022width}. \REV{Our theory preserves the original depth and gives each fixed covered teacher a same-depth approximation. The depth--width separation claims crucially rely on a family of increasingly hard functions, whereas our theory treats properties of a teacher network as fixed.}

\paragraph{Empirical compression at scale.}
Practical compression is dominated by four primitives: pruning, quantization, low-rank factorization, and distillation \citep{cheng2018survey,hoefler2021sparsity}, with foundational milestones in deep compression \citep{han2016deep}, lottery tickets \citep{frankle2019lottery}, and the methodological review of \citet{blalock2020state}. \REV{At LLM scale, post-training compression also spans structured row/column deletion \citep{ashkboos2024slicegpt}, depth pruning \citep{men2024shortgpt,gromov2025unreasonable}, and weight-quantization variants \citep{frantar2023gptq}.} These primitives are data-dependent and architecture-specific. They overlap with our work on application to post-training compression of a deep network without fine-tuning and thus provide motivation for this work. \REV{Our result is different in kind: it proves a uniform function-error certificate and a corresponding retained-width cap, not a claim of empirical superiority over these methods.}

\paragraph{Theoretical compression bounds and distillation.}
Existing constructive guarantees occupy a narrow strip: data-dependent coresets \citep{baykal2019data,liebenwein2020provable}, spectral/activation-covariance pruning \citep{suzuki2020spectral}, sparse-linear-approximation viewpoints \citep{yang2022theoretical}, NTK-preserving pruning of two-layer ReLU networks \citep{yang2023pruning}, low-rank recovery under approximately-low-rank activations \citep{zhang2025theoretical}, and randomized greedy schemes \citep{elcheairi2025theoretical}. Each is data-dependent, primitive-specific, or restricted to two-layer networks. Distillation \citep{hinton2015distilling,romero2015fitnets}, where a student is trained to mimic the teacher's function, is the closest practical cousin of derivative matching; theoretical accounts \citep{phuong2019towards,allenzhu2023towards} establish formal reductions in restricted settings (linear classifiers; multi-view data) but do not give explicit width bounds for deep nets, and the matching is on data rather than on the local Taylor expansion.

\paragraph{Tensor moment methods.}
Tensor-moment--like methods have been used to understand neural networks in several ways, see e.g., \citep{anandkumar2014tensor,janzamin2015beating}. 
% \citet{anandkumar2014tensor} establish that low-order observed-moment tensors recover a wide class of latent-variable models, and \citet{janzamin2015beating} use input-score-function tensors for guaranteed training of two-layer networks. 
The methodological predecessor of this work is \citet{wang2026compression}. It proves a polylogarithmic compression theorem for permutation-symmetric objects via tensor moment matching of the weight cloud. The present work advances that direction in three structural ways. First, the finite basis is the input-derivative basis at the origin, not a tensor-moment basis over weights. This resolves the problem of \citep{wang2026compression} being unable to compress a hidden layer whose previous or next layer is also wide. \REV{Second, for one fixed teacher our retained-width formula has no additional explicit $d_\ell$ factor, whereas the bound of \citet{wang2026compression} contains a polylogarithmic factor in the original width.} Third, here we establish compressibility of all layers of a deep network, whereas \citet{wang2026compression} treats a two-layer MLP with narrow input and output layers.

\section{Proof idea for compressing one layer}
\label{sec:proof-idea}

In this section, we focus on the following function defined by the $\ell$-th layer:
\begin{equation}
    \begin{aligned}
        x \mapsto y &= W^{(\ell+1)}\,\sigma\!\bigl(W^{(\ell)} h^{(\ell-1)}\bigr)\\
        &= \sum_{i=1}^{d_\ell} v_i\,\sigma\!\bigl(w_i^{\top} f(x)\bigr),
    \end{aligned}
\end{equation}
where $f:\Omega\to\R^{d_{\ell-1}}$ is the input-to-layer-($\ell{-}1$) map, $v_i$'s are the column vectors of $W^{(\ell+1)}$, and $w_i^\top$'s are the row vectors of $W^{(\ell)}$. We also write $g(x,w) \coloneqq \sigma(w^\top f(x))$. We aim to replace
\begin{equation}
    \begin{aligned}
        W^{(\ell)} &\in \R^{d_\ell \times d_{\ell-1}}
        \;\text{ by }\;
        \widetilde W^{(\ell)} \in \R^{d_\ell' \times d_{\ell-1}},\\
        W^{(\ell+1)} &\in \R^{d_{\ell+1} \times d_\ell}
        \;\text{ by }\;
        \widetilde W^{(\ell+1)} \in \R^{d_{\ell+1} \times d_\ell'} ,
    \end{aligned}
\end{equation}
where $d_{\ell}'$ is the compressed width. 

% The key idea is to match all input derivatives of $y$ at the origin up to some order $k$. If $f$ and $\sigma$ extend holomorphically to suitable complex neighborhoods, then every summand is a holomorphic function of the input on a complex polydisk, and Cauchy's inequality converts finite-order derivative matching at the origin into a uniform sup-norm bound on $\Omega$: matching all derivatives $\partial_x^\alpha y(0)$ for $|\alpha|\le k$ controls the Taylor remainder by a geometric factor $(R/\rho_0)^{k+1}$, where $\rho_0$ is determined by the activation's holomorphy radius.

\paragraph{Measure compression.}
We start by reformulating the $\ell$-th layer as an integration over the empirical measure of its neurons, the mean-field view of a network \citep{mei2018mean}. Define the counting measure $\mu=\sum_{i=1}^{d_\ell}\delta_{(v_i,w_i)}$ on $\R^{d_\ell'}\times\R^{d_{\ell-1}}$ over pairs of outgoing and incoming weights. For any function $a(v,w)$ and measure $\nu$,
\begin{equation}
    \langle a(v,w)\rangle_\nu\coloneqq\int a(v,w)\,d\nu(v,w).
\end{equation}
With this notation, the $\ell$-th layer can be written as $y(x) = \langle v g(x,w) \rangle_\mu$. The compression problem is then to find a new measure $\tilde{\mu}$ supported on $d'_\ell$ points, such that 
\begin{equation}
\label{eq:goal}
    \sup_{x\in\Omega}\bigl\|\langle v\,g(x,w)\rangle_{\tilde\mu}-\langle v\,g(x,w)\rangle_\mu\bigr\|_\infty\;\le\;\varepsilon_\ell,
\end{equation}
where $\varepsilon_\ell>0$ is the error budget allocated to this layer.

\paragraph{Derivative matching.} Eq.~\eqref{eq:goal} has to be satisfied for a large class of functions $y(x)$. One approach \citep{wang2026compression} is to match the expectation of low-degree polynomials of $v,w$, so that the error occurs only in the Taylor expansion tail (assuming any relevant function of $W^{(\ell)}$ is, in a sense, analytic, so that Taylor expansions converge to the actual function). \citet{wang2026compression} matches a monomial basis with $\binom{\din+\dout+k}{k}$ compressed points and proves that this is optimal if $y$'s dependence on $v$ and $w$ is a \emph{generic} permutation-symmetric function. However, here we surpass the bound by exploiting the specific expression of MLPs: it suffices to \emph{match the expectation of the derivatives of $g$, which has a much smaller basis} than the monomial basis. 

Note that we need to match the expectations $\langle v\,g(x,w)\rangle_\mu$---nothing more and nothing less, and that $g$ as a function of $w$ is parametrized by $O(1)$ parameters, i.e., by $x\in\Omega$. Assume $g$ is analytic in a sufficiently large range (see Assumption~\ref{asm:R-rho}) so that a Taylor expansion exists and converges to the actual function. From
\begin{equation}
    g(x,w) \approx \sum_{|\alpha|\le k} \frac{\partial_x^\alpha g(0,w)}{\alpha!} x^\alpha,
\end{equation}
(where $\alpha$ is a multi-index defined in Appendix~\ref{app:notation}), it follows that 
\begin{equation}
    \langle v\,g(x,w)\rangle_\mu \approx \sum_{|\alpha|\le k} \frac{x^\alpha}{\alpha!}  \langle v\,\partial_x^\alpha g(0,w)\rangle_\mu .
\end{equation}
Therefore, if we change a measure but preserve the expectations $\langle v\,\partial_x^\alpha g(0,w)\rangle$ for $|\alpha|\le k$, they represent almost the same input-output map. The compression error starts from the degree-$(k+1)$ Taylor remainder, which is thus of order $(R/\rho)^{k+1}$. This is summarized in the following theorem:
\begin{theorem}[Derivative-matching error]
\label{thm:step1}
Let $k\ge 0$, let $\tilde\mu=\sum_{i\in\mathcal K}\delta_{(\tilde v_{i},w_{i})}$ with $\mathcal K\subseteq [d_\ell]$, and write $\tilde y(x)\coloneqq\langle v\,g(x,w)\rangle_{\tilde\mu}=\sum_{i\in\mathcal K}\tilde v_{i}\,g(x,w_{i})$. Suppose
\begin{equation}
\label{eq:match}
    \bigl\langle v\,\partial_x^\alpha g(x,w)\big|_{x=0}\bigr\rangle_{\tilde\mu}\;=\;\bigl\langle v\,\partial_x^\alpha g(x,w)\big|_{x=0}\bigr\rangle_\mu
    \qquad\text{for every }\alpha\in\N^\din\text{ with }|\alpha|\le k,
\end{equation}
where both sides are $\R^{d_{\ell+1}}$-valued. Then for every $j\in[d_{\ell+1}]$ and every $x\in\Omega$,
\begin{equation}
\label{eq:step1-bound}
    \bigl|y_j(x)-\tilde y_j(x)\bigr|\;\le\;\Bigl(\|v_{:,j}\|_1+\|\tilde v_{:,j}\|_1\Bigr)\,M_g\,\frac{\rho_0}{\rho_0-R}\,\Bigl(\frac{R}{\rho_0}\Bigr)^{k+1},
\end{equation}
where $v_{:,j} \coloneqq (v_{ij})_{i\in[d_\ell]}\in\R^{d_\ell}$ and $\tilde v_{:,j} \coloneqq (\tilde v_{ij})_{i\in \mathcal{K}}\in\R^{|\mathcal K|}$. $M_g$ is a constant determined by $g$: pick any $\rho_0\in(R, \rho)$,
\begin{equation}
\label{eq:Mg-def}
    M_g\;\coloneqq\;\sup_{\substack{\|w\|\le W_{\max}^{(\ell)}\\ \|z\|\le\rho_0}}\bigl|g(z,w)\bigr| .
\end{equation}
\end{theorem}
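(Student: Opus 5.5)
The plan is to write the difference as a single function, use derivative matching to kill its low-order Taylor terms, and bound the remainder by Cauchy estimates. Fix an output coordinate $j$ and set
\[
e_j(x) \coloneqq y_j(x)-\tilde y_j(x) = \sum_{i\in[d_\ell]} v_{ij}\,g(x,w_i) - \sum_{i\in\mathcal K}\tilde v_{ij}\,g(x,w_i).
\]
Each $g(\cdot,w_i)$ should extend holomorphically to $\{\|z\|<\rho\}$ and be bounded by $M_g$ on the closed ball $\|z\|\le\rho_0$. This is where Assumptions~\ref{asm:holo}--\ref{asm:R-rho} and the definition~\eqref{eq:rho-eff} of $\rho$ enter.

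For $\|z\|\le r<\rho$, the mean value inequality along the segment from $0$ to $z$ gives $|w^\top f(z)-w^\top f(0)|\le \|w\|\,\|f(z)-f(0)\|\le W_{\max}^{(\ell)}\, r\, L_f(r)<\rho_\sigma$. This uses the operator-norm Jacobian bound $L_f(r)$, valid in the convex complex ball.

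By the definition~\eqref{eq:rhosigma} of $\rho_\sigma$, the point $w^\top f(z)$ then lies in $\mathcal D_\sigma$, so $\sigma(w^\top f(z))$ is holomorphic there. Since $r\mapsto rW_{\max}L_f(r)$ is nondecreasing, the condition holds for all $r<\rho$. On the compact ball $\|z\|\le\rho_0$ the modulus is continuous, hence bounded, and $M_g<\infty$. This boundedness step is the main technical check, mostly routine.

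Next comes the one-variable reduction. Fix $x\in\Omega$ with $x\neq 0$, so $\|x\|\le R$, and consider $\phi(t)\coloneqq e_j(tx/\|x\|)$ for $t\in\C$ with $|t|<\rho$. This is holomorphic in the disc. Since $\frac{d^n}{dt^n}\phi(0)=\sum_{|\alpha|=n}\frac{n!}{\alpha!}\,\partial^\alpha e_j(0)\,(x/\|x\|)^\alpha$, condition~\eqref{eq:match} gives $\phi^{(n)}(0)=0$ for all $n\le k$.

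On $|t|=\rho_0$ we have $\|tx/\|x\|\|=\rho_0$, so $|\phi(t)|\le (\|v_{:,j}\|_1+\|\tilde v_{:,j}\|_1)M_g \eqqcolon A$ by the triangle inequality. Cauchy's estimate gives $|\phi^{(n)}(0)|/n!\le A\rho_0^{-n}$. Summing the tail of the power series at $t=\|x\|\le R$ yields
\[
|e_j(x)|\le \sum_{n\ge k+1} A (R/\rho_0)^n = A\,\frac{\rho_0}{\rho_0-R}\Bigl(\frac{R}{\rho_0}\Bigr)^{k+1}.
\]
This is exactly~\eqref{eq:step1-bound}. The case $x=0$ is trivial since $e_j(0)=0$ by the $\alpha=0$ matching condition.

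Restricting to complex lines avoids any polydisk-versus-ball subtlety in multivariable Cauchy estimates, so I expect no real obstacle beyond verifying holomorphy and finiteness of $M_g$.
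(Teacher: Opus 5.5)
Your proof is correct and follows essentially the paper's route: you re-derive the holomorphy radius (the paper's Lemma~\ref{lem:holo}), restrict to a complex line through $x$, apply Cauchy's inequality on a circle of radius $\rho_0$, and sum the geometric tail. The only difference is cosmetic, and both versions give the same constant: you apply the Cauchy estimate once to the difference $e_j=y_j-\tilde y_j$, whose Taylor coefficients through order $k$ vanish, whereas the paper bounds each neuron's Taylor tail separately, sums with weights $|v_{ij}|$ and $|\tilde v_{ij}|$, and then uses $T_k y_j=T_k\tilde y_j$ with the triangle inequality.
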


From here on, we define the \emph{derivative feature map}:
\begin{equation}
\label{eq:Phi-def}
    \Phi(w)\;\coloneqq\;\bigl(\partial_x^\alpha g(x,w)\big|_{x=0}\bigr)_{|\alpha|\le k}\in\R^{D_k},\qquad D_k\; \coloneqq \;\binom{\din+k}{\din}. 
\end{equation}
Stacking over the $d_\ell$ incoming rows of $W^{(\ell)}$, 
\begin{equation}
\label{eq:Phi-stacked}
    \Phi(W^{(\ell)}) \;\coloneqq\; \
    \begin{pmatrix}
        \Phi(w_1) & \cdots & \Phi(w_{d_\ell})
    \end{pmatrix}
    \;\in\;\R^{D_k \times d_\ell}.
\end{equation}

\paragraph{Rank reduction and reweighting.}
The task of finding a compressed measure $\tilde{\mu}$ has now become a linear algebra problem: find a matrix $V\in \R^{d_{\ell+1}\times d_\ell}$ that satisfies $V\,\Phi(W^{(\ell)})^\top=W^{(\ell+1)}\,\Phi(W^{(\ell)})^\top$. The $i$-th column of $V$ is $\tilde{v}_i$, and the new measure is \REV{$\tilde{\mu}=\sum_{\substack{i\in[d_\ell]\\ \tilde{v}_i\neq 0}}\delta_{(\tilde{v}_i,w_i)}$}. \REV{It turns out there exists a $V$ with at most $D_k$ nonzero columns: we prove this in Lemma~\ref{lem:rank} and also propose Algorithm~\ref{alg:compression} to find such a $V$.} Furthermore, $V_{\max}$ can be guaranteed to be inflated by at most a factor of $D_k$, which could affect the next layer's error bound but by a controlled amount.

\paragraph{Main theorem and width bound.}
\begin{REV*}
Fix any $\rho_0\in(R,\rho)$, recall $M_g$ from Eq.~\eqref{eq:Mg-def}, and define the $k$-independent constants
\begin{equation}
\label{eq:C0-def}
    C_0\;\coloneqq \;V_{\max}\,M_g\,\frac{\rho_0}{\rho_0-R}\quad\text{and}\quad\lambda\;\coloneqq \;\log(\rho_0/R)\;>\;0.
\end{equation}
\REV{If $C_0=0$, the layer contribution is degenerate and the zero-output replacement has zero error. We therefore state the nontrivial case $C_0>0$ below.}
\end{REV*}

The whole approach above culminates in the following theorem for compressing a single hidden layer. In plain language, it answers the question: if there is an error budget $\varepsilon_\ell$, what matching order $k$ suffices? By solving $\mathrm{error}\sim (R/\rho_0)^{k+1}$, it suffices to take $k^* = O( \log(1/\varepsilon_\ell) )$, which gives the compressed width $D_{k^\star(\varepsilon_\ell)}=O((\log(1/\varepsilon_\ell))^{\din})$.

\begin{REV*}
\begin{theorem}[Intermediate-layer compression]
\label{thm:main}
\REV{Fix the teacher network and a hidden layer satisfying Assumptions~\ref{asm:holo}--\ref{asm:R-rho}. For any $\rho_0\in(R,\rho)$ with $C_0>0$ and $\lambda$ as in Eq.~\eqref{eq:C0-def}, any error budget $\varepsilon_\ell>0$, and any integer $k\ge 0$ satisfying}
\begin{equation}
\label{eq:k-cond}
    (k+1)\,\lambda\;\ge\;\log(2\,C_0/\varepsilon_\ell)\;+\;\din\,\log(k+\din)\;-\;\log(\din!),
\end{equation}
\REV{there exist an index set $\mathcal K=\{\kappa_1,\ldots,\kappa_n\}\subseteq[d_\ell]$ with $n=|\mathcal K|\le D_k$, where $D_k\coloneqq \binom{\din+k}{\din}$, and weight matrices $\widetilde W^{(\ell)}\in\R^{n\times d_{\ell-1}},\,\widetilde W^{(\ell+1)}\in\R^{d_{\ell+1}\times n}$ (constructed in Corollary~\ref{cor:l1}) satisfying $\widetilde W^{(\ell)}_{r,:}=W^{(\ell)}_{\kappa_r,:}$ for every $r\in[n]$, such that the compressed network $\tilde y(x)=\widetilde W^{(\ell+1)}\sigma(\widetilde W^{(\ell)}f(x))$ satisfies}
\begin{equation}
\label{eq:thm-bound}
    \sup_{x\in\Omega}\bigl\|y(x)-\tilde y(x)\bigr\|_\infty\;\le\;\varepsilon_\ell.
\end{equation}
\emph{Asymptotics.} Define
\begin{equation}
    k^\star(\varepsilon_\ell)\;\coloneqq \;\min\bigl\{k\in\N_0:k\text{ satisfies Eq.~\eqref{eq:k-cond}}\bigr\}.
\end{equation}
\REV{Then, as $\varepsilon_\ell\to 0^+$ with this teacher and $\din,\rho_0,R,V_{\max},M_g$ fixed,}
\begin{equation}
\label{eq:kstar-asymp}
    k^\star(\varepsilon_\ell)\;=\;\frac{\log(2C_0/\varepsilon_\ell)}{\lambda}\;+\;O\!\bigl(\log\log(1/\varepsilon_\ell)\bigr)
    \;=\;\Theta\!\bigl(\log(1/\varepsilon_\ell)\bigr),
\end{equation}
and consequently
\begin{equation}
\label{eq:dprime-asymp}
    \REV{D_{k^\star(\varepsilon_\ell)}\;=\;O\!\left(\frac{1}{\din!}\left(\frac{\log(2C_0/\varepsilon_\ell)}{\lambda}\right)^{\din}\right).}
\end{equation}
\end{theorem}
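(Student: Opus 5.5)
The plan is to combine Theorem~\ref{thm:step1} with a rank-reduction step that keeps the outgoing weights under $\ell_1$ control. First, the matching system $V\,\Phi(W^{(\ell)})^\top=W^{(\ell+1)}\Phi(W^{(\ell)})^\top$ has to be solved with a $V$ that has few nonzero columns. The matrix $\Phi(W^{(\ell)})\in\R^{D_k\times d_\ell}$ has rank $r\le D_k$. I will write the target vectors as a combination of its columns and apply a Carathéodory-type reduction (Lemma~\ref{lem:rank}). The idea is to choose a column basis $\mathcal K$ with $|\mathcal K|=r$ that maximizes $|\det|$ of the corresponding $r\times r$ minor, in the usual maximal-volume sense, after restricting to a set of $r$ independent rows. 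With this choice every column satisfies $\Phi(w_i)=\sum_{s\in\mathcal K}c_{is}\Phi(w_s)$ with $|c_{is}|\le 1$ by Cramer's rule.

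Second, I set $\tilde v_s=\sum_i v_i c_{is}$, which is the construction of Corollary~\ref{cor:l1}. This $\tilde v$ reproduces every matched moment exactly, since
\[
\sum_s \tilde v_s\Phi(w_s)=\sum_i v_i\Phi(w_i).
\]
It also gives the bound
\[
\|\tilde v_{:,j}\|_1\le\sum_s\sum_i|v_{ij}||c_{is}|\le D_k\|v_{:,j}\|_1\le D_kV_{\max}.
\]
The incoming rows are kept unchanged, which gives $\widetilde W^{(\ell)}_{r,:}=W^{(\ell)}_{\kappa_r,:}$.

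Third, I plug this into Eq.~\eqref{eq:step1-bound}. The prefactor is at most $(1+D_k)V_{\max}\le 2D_kV_{\max}$, so the error is at most $2D_kC_0e^{-(k+1)\lambda}$. Now use $D_k=\binom{k+\din}{\din}\le (k+\din)^{\din}/\din!$, which holds because $\binom{k+\din}{\din}=\prod_{i=1}^{\din}(k+i)/\din!$. Hence Eq.~\eqref{eq:k-cond} gives exactly $2D_kC_0e^{-(k+1)\lambda}\le\varepsilon_\ell$. This holds uniformly in $j$ and $x$, which proves Eq.~\eqref{eq:thm-bound}.

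For the asymptotics, set $A=\log(2C_0/\varepsilon_\ell)$. The left side of Eq.~\eqref{eq:k-cond} grows linearly in $k$, while the right side grows only logarithmically. Taking $k=\lceil (A+c\log A)/\lambda\rceil$ with $c$ a large constant therefore satisfies the condition for small $\varepsilon_\ell$. Conversely, any admissible $k$ must have $(k+1)\lambda\ge A-O(1)$, since $\din\log(k+\din)-\log\din!\ge -\log\din!$. Together these give $k^\star=A/\lambda+O(\log A)$, which is Eq.~\eqref{eq:kstar-asymp}. Finally, $D_{k^\star}\sim (k^\star)^{\din}/\din!$, and $(1+O(\log A/A))^{\din}=1+o(1)$, which gives Eq.~\eqref{eq:dprime-asymp}.

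The main obstacle is the $\ell_1$ control of the recomputed weights, namely the bound $|c_{is}|\le1$ via max-volume selection. Without that bound Theorem~\ref{thm:step1} gives nothing useful, because the prefactor involves $\|\tilde v_{:,j}\|_1$. The max-volume choice must be made on a full-rank row restriction of $\Phi(W^{(\ell)})$ so that Cramer's rule applies. The remaining steps are direct arithmetic.
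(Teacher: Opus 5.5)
Your proposal is correct and follows the paper's proof essentially step for step: maximum-volume column selection with Cramer's rule (Lemma~\ref{lem:rank}) gives the $D_kV_{\max}$ bound of Corollary~\ref{cor:l1}; then Theorem~\ref{thm:step1} with prefactor $(1+D_k)\le 2D_k$, the bound $D_k\le(k+\din)^{\din}/\din!$, and the same two-sided estimate $k^\star=A/\lambda+O(\log A)$ finish the argument. The one difference is minor: you fix a set of $r$ independent rows first and maximize $|\det|$ only over column sets, whereas the paper maximizes jointly over rows and columns, and your variant is an equally valid way to obtain $|c_{is}|\le 1$.
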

\begin{remark}[Width dependence and finite-scale compression]
With the fixed teacher constants and $\din$ held fixed, $k^\star(\varepsilon_\ell)=\Theta(\log(1/\varepsilon_\ell))$ and $D_{k^\star(\varepsilon_\ell)}=O((\log(1/\varepsilon_\ell))^{\din})$. If $M_g$ and the radius quantities are controlled uniformly across a family, dependence on $V_{\max}$ enters the cap only through $\log V_{\max}$; if that outgoing norm is also uniformly bounded, the cap is uniform in $d_\ell$. For this bound to give a genuinely narrower layer, the original width must satisfy $d_\ell>D_{k^\star(\varepsilon_\ell)}$. Appendix~\ref{app:nonvacuity} gives sharper finite-width bookkeeping, a closed-form sufficient threshold, and the realized-rank caveat.
\end{remark}

\paragraph{Corollary (exact analytic parameterization).}
Suppose $\Omega=\iota(U)$ for a one-to-one real-analytic parameterization $\iota:U\subset\R^m\to\R^\din$, and suppose the reparameterized upstream map $f\circ\iota$ satisfies Assumptions~\ref{asm:holo}--\ref{asm:R-rho} in the $u$-coordinates (after translation if needed). Applying Theorem~\ref{thm:main} to $u$ gives the same uniform error guarantee on $\Omega$ with $D_k=\binom{m+k}{m}$ and hence retained width $O((\log(1/\varepsilon_\ell))^m)$, with constants determined in the reparameterized coordinates.

\paragraph{Corollary (architectural bottleneck).}
Fix $j<\ell$ and use $u=h^{(j)}(x)$ as the input to the suffix from layer $j+1$ through layer $\ell$. If that suffix satisfies the transformed assumptions on $h^{(j)}(\Omega)$ after recentering, then $d_j$ is admissible and Theorem~\ref{thm:main} holds with $D_k=\binom{d_j+k}{d_j}$. Whenever the transformed assumptions hold for the candidate bottlenecks, one may therefore use the architectural bound
\begin{equation}
    m_{\mathrm{eff},\ell}\;\le\;m_\ell^{\mathrm{arch}}
    \;\coloneqq\;\min\{\din,d_1,\dots,d_{\ell-1}\}.
\end{equation}
\end{REV*}

\paragraph{Numerics.}
\REV{We present small-scale numerical experiments to test the construction and its predicted geometric error decay over a finite range of derivative orders and input radii.}

\begin{figure}[!t]
  \centering
  \includegraphics[width=0.7\textwidth]{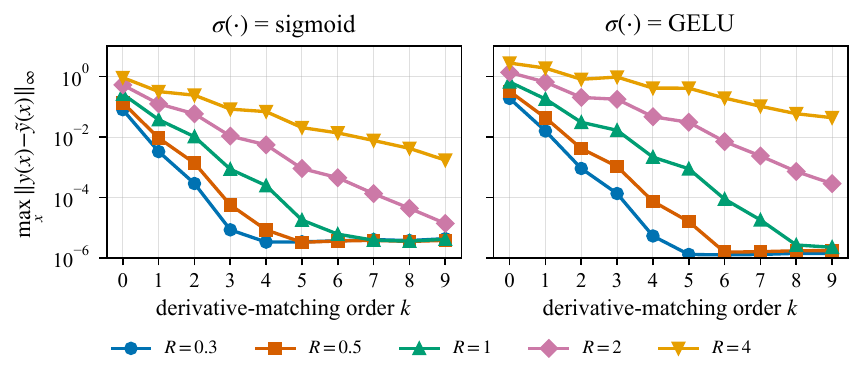}
  \caption{Single-layer compression error versus derivative-matching order $k$ on a randomly initialized $[3,2048,2048]$ network, for sigmoid (left) and GELU (right) activations. Each curve is one input radius $R$. Both panels exhibit geometric decay in $k$ at every $R$, with the per-step rate slowing as $R$ grows --- the qualitative behavior predicted by the $(R/\rho_0)^{k+1}$ rate of \REV{Theorem~\ref{thm:main}}. For $R\le 1$ the curves saturate at the float32 noise floor ($\sim\!10^{-6}$) by $k\approx 5$; at $k=9$ the error is below $10^{-5}$ uniformly in the unit ball for both activations.}
  \label{fig:onelayer-error}
\end{figure}

We instantiate \REV{Theorem~\ref{thm:main}} on a single hidden layer of a randomly initialized two-layer network with input dimension $\din=3$, hidden width $d_1=2048$, and output dimension $\dout=2048$. We compress the hidden layer from $2048$ neurons down to $D_k=\binom{3+k}{3}$ derivative-matched neurons. Two activations with qualitatively different properties are tested on identical random weights:
\begin{enumerate}
    \item sigmoid, which is real-analytic with poles at $\pm i\pi$ in $\C$, therefore $\rho_\sigma=\pi$;
    \item GELU, which is entire on $\C$ but whose $k$-th derivative at the origin grows superexponentially with $k$. 
\end{enumerate}
We measure $\sup_{x\in\mathcal X_R}\|y(x)-\tilde y(x)\|_\infty$ on a uniform sample from the closed ball $\{x\in\R^{\din}:\|x\|\le R\}$, matching the domain $\Omega$ in the statement of \REV{Theorem~\ref{thm:main}}, for different input radii.

\REV{Figure~\ref{fig:onelayer-error} is consistent with the qualitative finite-range prediction of Theorem~\ref{thm:main} for both sigmoid and GELU.} Both panels show clean geometric decay in $k$ at every radius, with the per-step rate slowing as $R$ grows --- consistent with the $(R/\rho_0)^{k+1}$ prediction.
% We do not draw an explicit theoretical slope on the figure because the relevant rate is set by the effective holomorphic radius $\rho_0$, which depends jointly on $\sigma$ and the weight matrix $W^{(1)}$, rather than by $\rho_\sigma$ alone (so a $(R/\pi)^{k+1}$ comparison for sigmoid would understate the true rate, and there is no pre-specified $\rho_\sigma$ for GELU). 
For $R\le 1$ the error reaches the float32 noise floor ($\sim\!10^{-6}$) by $k\approx 5$ for both activations; even at the largest tested radius $R=4$, which already exceeds $\rho_\sigma=\pi$ for sigmoid, the error still decreases exponentially over $k=0,\dots,9$. 
% At $k=9$ the layer has been compressed roughly $10\times$ ($2048\to 220$ neurons) and the error sits at the float32 epsilon for $R\le 1$.
As a remark, we would have expected that in the GELU case the error decays visibly faster than in the sigmoid case since $\rho_{\sigma} = \infty$ for $\sigma = \mathrm{GELU}$. However, \REV{the error decreases with $k$ for every tested $R$.} A possible explanation is that the higher derivatives of GELU grow rapidly at the origin, which hinders the error decay with $k$.

\section{Compressing all layers}
\label{sec:deep}

In this section, we discuss compression of all layers of a deep MLP. \REV{Applying Theorem~\ref{thm:main} to every hidden layer of one fixed teacher yields retained-width bounds that can be composed.} We need to address two additional questions here: (1) What is the optimal order to compress the layers? (2) If given a total error budget $\varepsilon$, how to assign it to each layer's compression? We will briefly discuss these caveats in the rest of this section, and leave a formal treatment to Appendix~\ref{app:composition}.

The short answers are: (1) \REV{Compressing the layers backward from the last hidden layer to the first keeps each step's analytic constants bounded by their original-network values} (Appendix~\ref{app:composition}, Proposition~\ref{prop:backward}), which licenses the budget split; forward order inflates these constants recursively (Proposition~\ref{prop:forward}) and yields no rigorous guarantee. Backward order is therefore preferred, though for moderately sized networks one is empirically free to choose any order without paying a measurable price. (2) When compressing in backward order, splitting the budget evenly as $\varepsilon_\ell = \varepsilon/\left((L-1)\,\overline\Lambda\right)$ --- where $\overline\Lambda\ge 1$ is an a-priori upper bound on the downstream Lipschitz factor, computed from the \emph{original} network (Appendix~\ref{app:composition}, Lemma~\ref{lem:apriori-lambda}) --- guarantees total error at most $\varepsilon$.

We provide intuitive reasoning for these facts. Per Sec.~\ref{sec:proof-idea}, after compressing the $\ell$-th layer, $W^{(\ell)}$ becomes a subset of the original, whereas $W^{(\ell+1)}$ is reweighted, and its maximum row $\ell_1$-norm (equivalently its $\ell_\infty\!\to\!\ell_\infty$ operator norm) increases at most by a factor of $D_k$. If we compress backward, the inflated $W^{(\ell+1)}$ is never compressed again in the subsequent compressions, so the error bounds of the layers essentially do not influence each other. However, if we compress forward, the inflated $W^{(\ell+1)}$ should be used to calculate the effective holomorphic radius at the next layer, which in the worst case shrinks $\rho$ by $D_k$---and this effect stacks as we compress deeper layers. The practical reason why the error of forward-order compression does not explode (see Fig.~\ref{fig:deep-error}) is presumably that the function $f: x\to h^{(\ell+2)}$ is not changed by much due to the compression of the $\ell$-th layer, even if $W^{(\ell+1)}$ is inflated. Nevertheless, this does not lead to a rigorous error guarantee for forward-order compression. 

Our main theoretical error bound for deep MLP compression is as follows, rigorously stated as Theorem~\ref{thm:deep-comp}. 

\begin{theorem}[Deep MLP compression; informal]
    \label{thm:deep-comp-informal}
    For any error budget $\varepsilon>0$, compressing the hidden layers of an $L$-layer MLP $F$ in backward order with a uniform per-layer budget yields a compressed network $\widetilde F$ satisfying
    \begin{equation}
        \sup_{x\in\Omega}\bigl\|\widetilde F(x)-F(x)\bigr\|_\infty\;\le\;\varepsilon,
    \end{equation}
    in which every hidden layer has compressed width
    \begin{equation}
        d_\ell'\;=\;O\!\left(\bigl(\log(L/\varepsilon)\bigr)^{\din}\right),
        \qquad \ell=1,\dots,L-1,
    \end{equation}
    where the suppressed constants depend only on the activation $\sigma$ and on the weight matrices of the original network (through the per-layer holomorphic radii, the downstream Lipschitz factors, and the constants in Theorem~\ref{thm:main}). \REV{Consequently, for fixed $L,d_{\mathrm{in}},d_{\mathrm{out}}$ and the fixed teacher's constants, the total parameter count of $\widetilde F$ is polylogarithmic in $1/\varepsilon$. Equation~\eqref{eq:param-count} records the explicit dependence on $L$.}
\end{theorem}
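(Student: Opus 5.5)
The plan is to compress the hidden layers one at a time in backward order, $\ell=L-1,L-2,\dots,1$, each time applying Theorem~\ref{thm:main} to the \emph{current} network. The errors are then combined by a telescoping sum. Write $F_{L-1}$ for the network after compressing layer $L-1$, $F_{L-2}$ after also compressing layer $L-2$, and so on, with $F_L\coloneqq F$. Then
\begin{equation}
\sup_{x\in\Omega}\bigl\|F_1(x)-F(x)\bigr\|_\infty\le\sum_{\ell=1}^{L-1}\sup_{x\in\Omega}\bigl\|F_\ell(x)-F_{\ell+1}(x)\bigr\|_\infty .
\end{equation}
The step from $F_{\ell+1}$ to $F_\ell$ changes only the pre-activation $W^{(\ell+1)}h^{(\ell)}$ feeding layer $\ell+1$, by at most $\varepsilon_\ell$ in $\|\cdot\|_\infty$. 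That perturbation is then propagated through the already-compressed downstream layers $\ell+1,\dots,L$ of $F_{\ell+1}$. Its effect on the output is therefore at most $\Lambda_\ell\varepsilon_\ell$, where $\Lambda_\ell$ is the $\ell_\infty\to\ell_\infty$ Lipschitz constant of that downstream map. This constant is bounded by the product of $\sup|\sigma'|$ on the relevant range and the row-$\ell_1$ norms of the downstream (possibly inflated) weight matrices. Choosing $\varepsilon_\ell=\varepsilon/((L-1)\overline\Lambda)$ then gives total error at most $\varepsilon$, provided $\Lambda_\ell\le\overline\Lambda$ for all $\ell$.

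The first key step is to check that the hypotheses of Theorem~\ref{thm:main} at layer $\ell$ hold with the \emph{original} constants. This is what backward order buys. When layer $\ell$ is compressed, everything upstream, namely $f=h^{(\ell-1)}$, $W^{(\ell)}$, $\rho_f$, $L_f$ and $\rho_\sigma$, is untouched. The only matrix altered by previous steps is $W^{(\ell+1)}$: its columns for neurons kept at layer $\ell+1$ form a subset of the original ones, and its rows are a subset of the original rows. Hence $W_{\max}^{(\ell)}$, $\rho$, $M_g$ are the original ones, and $V_{\max}$ at step $\ell$ is at most the original $V_{\max}$, since deleting entries does not increase row-$\ell_1$ norms. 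Thus $C_0$ and $\lambda$ are bounded by teacher values, which is the content of Proposition~\ref{prop:backward}.

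The second step, which I expect to be the main obstacle, is the a-priori bound $\overline\Lambda$ (Lemma~\ref{lem:apriori-lambda}). Downstream matrices $W^{(m+1)}$ for $m>\ell$ have been reweighted, and Corollary~\ref{cor:l1} only guarantees that their row-$\ell_1$ norm inflates by a factor at most $D_{k_m}$. I would first try bounding $\Lambda_\ell\le\prod_{m>\ell}\bigl(\|\sigma'\|_{\infty,\mathrm{range}}\cdot D_{k_m}V^{(m+1)}_{\max}\bigr)$. This makes $\overline\Lambda$ depend polylogarithmically on $1/\varepsilon$ through the $D_{k_m}$, which creates a mild circularity. The circularity can be resolved because $\log\overline\Lambda=O(L\,\din\log\log(1/\varepsilon))$: it only adds lower-order terms to $\log(1/\varepsilon_\ell)$, and these are absorbed into the $O(\log\log)$ term in Eq.~\eqref{eq:kstar-asymp}. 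Alternatively, one can bound $\Lambda_\ell$ by the original network's Lipschitz factor plus a small correction. This uses the fact that the compressed downstream map agrees with the original one up to derivative order $k$, so its Lipschitz constant on the relevant tube can be controlled by Cauchy estimates on the same polydisk. For ranges of $\sigma'$ one also needs the perturbed pre-activations to stay in a fixed compact set, which holds since the perturbations are at most $\varepsilon$.

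Finally, plug $\varepsilon_\ell$ into Eq.~\eqref{eq:dprime-asymp}. Since $\log(2C_0^{(\ell)}(L-1)\overline\Lambda/\varepsilon)=O(\log(L/\varepsilon))$ with teacher-dependent constants, this gives $d_\ell'\le D_{k^\star}=O((\log(L/\varepsilon))^{\din})$ for each $\ell$. Summing $d_{\ell-1}'d_\ell'$ over the layers, together with the $\din d_1'$ and $d_{L-1}'\dout$ boundary terms, yields a polylogarithmic parameter count.
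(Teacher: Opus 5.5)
Your main route is essentially the paper's proof. It uses telescoping with downstream Lipschitz factors (Theorem~\ref{thm:total-error}, where the paper uses the global constant of Assumption~\ref{asm:Lsigma} rather than a local $\sup|\sigma'|$), Proposition~\ref{prop:backward} for the per-step constants, and the uniform budget $\varepsilon/((L-1)\overline\Lambda)$ closed by Lemma~\ref{lem:apriori-lambda}. That lemma turns your ``lower-order terms'' remark into a proof: the Lipschitz bound $\psi(\Lambda)$ grows only like $(\log\Lambda)^{(L-2)\din}=o(\Lambda)$, so some $\overline\Lambda$ with $\psi(\overline\Lambda)\le\overline\Lambda$ exists, and only then is its rate bootstrapped. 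Your wording instead assumes the rate to resolve the circularity.

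Discard the alternative Cauchy-estimate route you float, however. Derivative matching is in $x$ at the origin, so it says nothing about the compressed downstream map's Lipschitz constant in the high-dimensional pre-activation directions along which the compression perturbation actually moves.
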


\begin{REV*}
\begin{figure}[!th]
  \centering
  \includegraphics[width=0.7\textwidth]{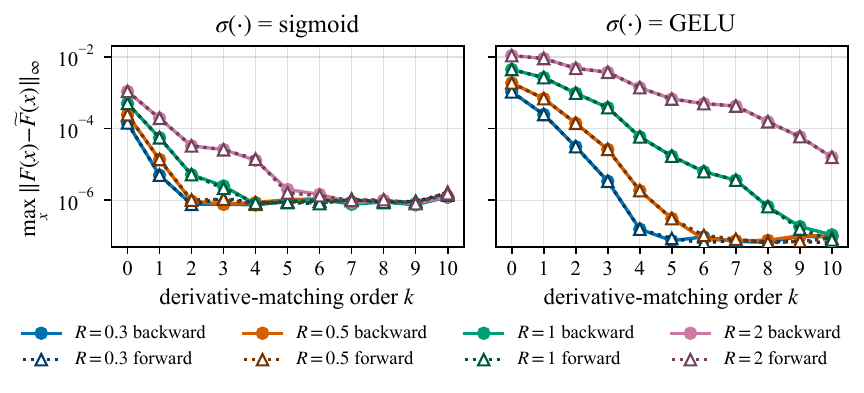}
  \caption{Composed compression error versus per-step derivative-matching order $k$ on a randomly initialized $[2,1024,1024,1024,1024,2]$ network, for sigmoid (left) and GELU (right). Each panel shows eight curves: four input radii $R\in\{0.3,0.5,1,2\}$ in four colors, with backward order (compress the deepest hidden layer first) drawn solid with circles and forward order (shallowest first) drawn dotted with triangles. The two orders are essentially indistinguishable at every $(k,R)$. Each compression step reduces its hidden layer to $D_k=\binom{2+k}{2}\in\{1,3,6,\dots,66\}$ neurons.}
  \label{fig:deep-error}
\end{figure}
\end{REV*}

\paragraph{Numerics.}
We instantiate the composition theorem on a deep network with $L=5$, four hidden layers of width $1024$, $\din=\dout=2$. Compression is applied to all four hidden layers sequentially with a uniform per-step order $k$, in two compression orders: backward $=[3,2,1,0]$ and forward $=[0,1,2,3]$. The two orders are applied to the same initialized weights for two activations (sigmoid and GELU). Each step compresses its hidden layer from $1024$ to $D_k=\binom{2+k}{2}$ neurons (from 1024 down to 66 at $k=10$). We measure $\sup_{x\in\mathcal X_R}\|F(x)-\widetilde F(x)\|_\infty$ on a uniform sample from the closed ball of radius $R$.

Figure~\ref{fig:deep-error} reports the resulting test error. The most striking empirical observation is that backward and forward orders are indistinguishable: at every $(k, R, \sigma)$ in the sweep the two compressed networks agree to within float32 noise, even though their per-step constants ($W_{\max}$ inflation, downstream Lipschitz factors) differ in theory. Both panels exhibit clean geometric decay in $k$ at every radius, with the per-step rate slowing as $R$ grows --- the same $(R/\rho_0)^{k+1}$-type behavior as in the single-layer experiment, now compounded across four hidden layers. Sigmoid reaches the float32 noise floor by $k\approx 2$ for $R\le 1$ and by $k\approx 5$ for $R=2$; GELU keeps decreasing through $k=10$, ending below $10^{-7}$ for $R\le 1$ and at $\sim\!10^{-5}$ for $R=2$. At the deepest sweep point ($k=10$) every hidden layer has been compressed from $1024$ to $66$ neurons --- a $\sim\!15\times$ width reduction per layer, two orders of magnitude in per-layer parameter count --- while the input--output map agrees with the original to noise-floor accuracy on the unit ball, \REV{which is consistent with the predicted geometric error decay over the tested finite range.}

\begin{REV*}
\paragraph{Post-training compression.}
In Fig.~\ref{fig:trained-deep}, we test the same construction after unconstrained training. On the unit disk in $\R^2$, five teacher networks with sigmoid activation are fit to
\begin{equation}
    q(x_1,x_2)=\sin(2x_1)+\tfrac12\cos(3x_2)+\tfrac14x_1x_2.
    \label{eq:trained-target}
\end{equation}
We use $20{,}000$ Sobol training points and evaluate on $150{,}000$ held-out Sobol points followed by projected local searches starting from the largest observed discrepancies. For each seed, the frozen randomly initialized network is compressed by the identical procedure as a control. Matching orders $k=0,\ldots,10$ give $D_k=\binom{k+2}{2}$.

\begin{figure}[!th]
  \centering
  \includegraphics[width=0.85\textwidth]{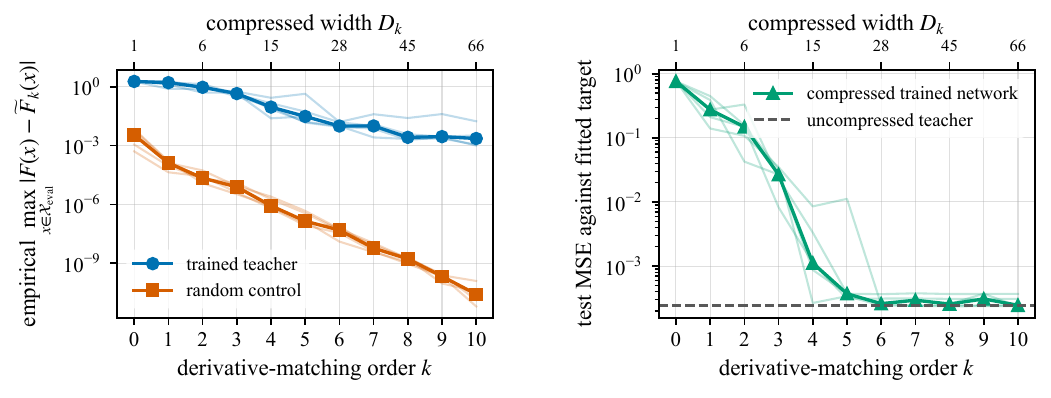}
  \caption{Compression of all three hidden layers of a trained $[2,512,512,512,1]$ sigmoid MLP, without fine-tuning. Thin curves are the five seeds and bold curves their medians. Left: empirical maximum teacher--compression discrepancy on the held-out set and projected local searches; frozen random controls are shown for comparison. Right: target MSE, with the median uncompressed-teacher MSE dashed.}
  \label{fig:trained-deep}
\end{figure}

We use a predeclared task-preservation criterion: the compressed network's target MSE be at most $1.05$ times its teacher's MSE. Across the five seeds, the median first width meeting this criterion is $28$, corresponding to a $304\times$ reduction in total parameter count. Train, validation, and test errors agree closely across all five seeds, with test/train MSE ratios in $[0.99953,0.99980]$, giving no sign of overfitting. Assumption~\ref{asm:R-rho} is violated for all trials (especially at layer $2$, $R/\rho$ is at least $3$), but the compression construction still yields a practical width reduction whose error decreases substantially over the tested range as $k$ increases. This further supports that the applicability of the construction is broader than the theorem's formal guarantee. Appendix~\ref{app:experiments} gives the full protocol, per-seed results, and numerical checks.
\end{REV*}

\section{Discussion and outlook}
\label{sec:discussion}

\begin{REV*}
We have established a uniform sup-norm approximation theorem for analytic deep MLPs based on input-derivative matching. At layer~$\ell$, the construction retains $d_\ell'\le D_{k_\ell^\star}$ neurons. For the fixed teacher, $D_{k_\ell^\star}$ is polylogarithmic in the inverse per-layer error. The certificate therefore has no additional explicit original-width factor once these constants are evaluated, but it can depend indirectly on width through the teacher's norms and downstream factors. The construction is explicit: compute derivative features, select a rank basis, and recompute the outgoing weights. To our knowledge, this is the first uniform compression theorem that applies to arbitrary intermediate layers of a deep MLP and composes layer by layer with a controlled error budget. It supplies a function-level existence certificate complementary to practical post-training methods.
\end{REV*}

Our construction (and associated numerical results) serves as a proof of existence and is not necessarily practical as-is. In particular, it is compute- and memory-expensive to evaluate the high-order derivative features. We do expect future work to improve on the residual error (expectedly by a constant factor) and to reduce the computational cost of compression. 

\begin{REV*}
The ambient feature count $\binom{\din+k}{\din}$ can be replaced by $\binom{m+k}{m}$ for any admissible effective input dimension $m$. Extending this result to approximately low-dimensional or noisy near-manifold inputs is an important open problem that would bring the theory closer to settings relevant to practical post-training compression. Empirical intrinsic-dimension estimates for common image datasets lie roughly between $10$ and $50$ \citep{pope2021intrinsic}. These estimates do not by themselves provide an admissible effective dimension under our theorem, but they illustrate the potential scale: if we match up to the fifth derivatives, the retained width ranges from $10^3$ to $10^6$. Thus, a successful near-manifold extension could yield retained-width bounds relevant to large-model compression.
\end{REV*}

Finally, the construction has several conceptual implications, all of which are worth exploring in future work:
\begin{enumerate}
    \item \REV{\emph{Critical retained-width.} Whether a network class admits a universal critical width, or a matching information-theoretic lower bound, remains open.}
    \item \emph{Token granularity.} There has been a tendency to train LLMs with low precision $\varepsilon$ to gain speed. However, our theory (also Figs.~\ref{fig:onelayer-error} and \ref{fig:deep-error} visually) suggests that wide networks do not store more patterns than a $\operatorname{polylog}(1/\varepsilon)$-wide network. It is thus worth exploring the concrete tradeoff between model expressivity and numerical precision.
    \item \emph{Proof of Lottery ticket hypothesis.} The existence of universal compression approaches could lead to a proof of the dynamical lottery ticket hypothesis (DLTH), as elucidated by \citet{wang2026compression}. Our compression method does not establish DLTH, but its derivative-matching construction may suggest tools for studying DLTH in deep neural networks.
\end{enumerate}

\label{sec:main-end}
\clearpage

\bibliographystyle{plainnat}
\bibliography{references}

\appendix

\section{Notations}
\label{app:notation}

\paragraph{Multi-index calculus.}
For a multivariate function on $\R^{\din}$ we use the standard multi-index shorthand. A \emph{multi-index} is a tuple $\alpha=(\alpha_1,\dots,\alpha_{\din})\in\N_0^{\din}$ of non-negative integers; its \emph{order} is $|\alpha|\coloneqq\alpha_1+\cdots+\alpha_{\din}$, its \emph{factorial} is $\alpha!\coloneqq\alpha_1!\,\alpha_2!\cdots\alpha_{\din}!$, the \emph{monomial} of $x\in\R^{\din}$ is $x^\alpha\coloneqq x_1^{\alpha_1}\cdots x_{\din}^{\alpha_{\din}}$, and the \emph{partial derivative} of order $\alpha$ is
\begin{equation}
    \partial_x^\alpha
    \coloneqq
    \frac{\partial^{|\alpha|}}{\partial x_1^{\alpha_1}\cdots\partial x_{\din}^{\alpha_{\din}}} .
\end{equation}
The Taylor polynomial of degree $k$ of a smooth function $h:\R^{\din}\to\R$ at the origin is $T_k h(x)\coloneqq\sum_{|\alpha|\le k} (x^\alpha/\alpha!)\,\partial_x^\alpha h(0)$. The number of multi-indices with $|\alpha|\le k$ in dimension $\din$ is
\begin{equation}
    D_k
    \;\coloneqq\;
    \#\{\alpha\in\N_0^{\din}:|\alpha|\le k\}
    \;=\;
    \binom{\din+k}{\din},
\end{equation}
which is the dimension of the space of polynomials of degree at most $k$ in $\din$ variables and the size of the basis we match.

\paragraph{Norms.}
For a vector $u\in\R^d$, $\|u\|$ denotes the Euclidean ($\ell_2$) norm, $\|u\|_1=\sum_i|u_i|$ the $\ell_1$ norm, and $\|u\|_\infty=\max_i|u_i|$ the sup-norm. For a matrix $A$, $\|A\|_{\mathrm{op}}$ is the spectral (operator-2) norm and $\|A\|_{\mathrm{op},\infty}$ is the $\ell_\infty\!\to\!\ell_\infty$ induced norm (the maximum absolute row sum). 

\paragraph{Matrix and submatrix notation.}
We use Python/NumPy-style slicing throughout. For a matrix $A\in\R^{m\times n}$, $A_{i,:}$ is its $i$-th row (as a vector), $A_{:,j}$ is its $j$-th column, and for index sets $I\subseteq[m]$, $J\subseteq[n]$, $A[I,J]$ is the submatrix with rows $I$ and columns $J$. We write $A^\top$ for the transpose and $A^+$ for the Moore--Penrose pseudoinverse. The image (column space) of $A$ is $\Im(A)$. The standard basis vector with a $1$ in coordinate $r$ and zeros elsewhere is $e_r$. 

\paragraph{Complex-analytic objects.}
The open complex ball of radius $r$ centered at the origin in $\C^d$ is $B_\C(0,r)\coloneqq\{z\in\C^d:\|z\|<r\}$, and $\overline{B_\C(0,r)}$ is its closure. For $z\in\C$ and a set $S\subseteq\C$, $\dist(z,S)\coloneqq\inf_{w\in S}|z-w|$. The Jacobian of $f$ at $z$ is denoted $Jf(z)$. Several holomorphic radii play distinct roles and are easy to confuse:
\begin{itemize}
    \item $\rho_f$: holomorphy radius of $f$ in the input variable (Assumption~\ref{asm:holo});
    \item $\rho_\sigma$: distance from the activation's input range to the boundary of $\sigma$'s holomorphic domain $\mathcal D_\sigma$ (Eq.~\ref{eq:rhosigma});
    \item $\rho$: the \emph{effective} holomorphic radius of the composed map $g(\cdot,w)$, defined in Eq.~\eqref{eq:rho-eff};
    \item $\rho_0$: a working radius chosen in $(R,\rho)$, used to apply Cauchy's inequality.
\end{itemize}
The hierarchy $R<\rho_0<\rho\le\rho_f$ is maintained throughout (Assumption~\ref{asm:R-rho}).

\paragraph{Measure-theoretic notation.} The Dirac function at $(v,w)$ is $\delta_{(v,w)}$.

\section{Formal theory on single-layer compression}
\label{app:single-layer}

\subsection{Derivative matching controls the output}

The goal of this subsection is to show that if the compressed measure $\tilde\mu$ matches all $x$-derivatives of $\langle v\,g(\cdot,w)\rangle_\mu$ at $0$ up to order $k$, then $\langle v\,g(x,w)\rangle_{\tilde\mu}$ approximates $\langle v\,g(x,w)\rangle_\mu$ uniformly on $\Omega$, with error decaying geometrically in $k$.

\begin{lemma}[Sufficient holomorphic radius for $g(\cdot,w)$]
\label{lem:holo}
Under Assumptions~\ref{asm:holo}--\ref{asm:R-rho}, for every $w\in\R^{d_{\ell-1}}$ with $\|w\|\le W_{\max}^{(\ell)}$ and every $r\in(0,\rho)$, the map $z\mapsto g(z,w)$ extends holomorphically to $B_\C(0,r)$. In particular, $g(\cdot,w)$ is holomorphic on an open neighborhood of $\overline{B_\C(0,\rho_0)}$.
\end{lemma}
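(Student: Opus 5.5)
The plan is to realize $g(z,w)=\sigma(w^\top f(z))$ as a composition of two holomorphic maps. This requires showing that the inner map $z\mapsto w^\top f(z)$ sends $B_\C(0,r)$ into the domain $\mathcal D_\sigma$ of the maximal holomorphic extension of $\sigma$. Fix $w$ with $\|w\|\le W_{\max}^{(\ell)}$ and $r\in(0,\rho)$. By the definition of $\rho$ in Eq.~\eqref{eq:rho-eff} as a supremum, there is some $r'\in(r,\rho]$, or more carefully $r'$ with $r<r'<\rho_f$, for which $r'\,W_{\max}^{(\ell)}L_f(r')<\rho_\sigma$. If the supremum set is not an interval, I would first observe that $r\mapsto rW_{\max}^{(\ell)}L_f(r)$ is nondecreasing, since $L_f$ is a supremum over growing balls. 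Hence the admissible set is an initial segment $(0,\rho)$ or $(0,\rho]$, and any $r'\in(r,\rho)$ works. Because $r<\rho\le\rho_f$, the map $f$ is holomorphic on $B_\C(0,\rho_f)\supset\overline{B_\C(0,r')}$, so $L_f(r')<\infty$ by Assumption~\ref{asm:holo}.

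Next I bound the inner map. For $\|z\|\le r'$, the segment $t\mapsto tz$ lies in the closed ball $\overline{B_\C(0,r')}$, which is convex. The mean-value inequality for holomorphic (hence $C^1$) maps along this segment gives $\|f(z)-f(0)\|\le L_f(r')\,\|z\|$, using the operator norm of the complex Jacobian. Since $w$ is real, Cauchy--Schwarz extends to complex vectors, and so
\begin{equation}
|w^\top f(z)-w^\top f(0)|\le W_{\max}^{(\ell)}L_f(r')\,r'<\rho_\sigma .
\end{equation}
By the definition of $\rho_\sigma$ in Eq.~\eqref{eq:rhosigma}, the open disk of radius $\rho_\sigma$ around $w^\top f(0)$ avoids $\C\setminus\mathcal D_\sigma$, so it is contained in $\mathcal D_\sigma$. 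Hence $w^\top f(z)\in\mathcal D_\sigma$ for all $\|z\|\le r'$. The map $z\mapsto w^\top f(z)$ is holomorphic as a linear functional of $f$, and $\sigma$ is holomorphic on $\mathcal D_\sigma$. Their composition is therefore holomorphic on $B_\C(0,r')\supset B_\C(0,r)$. It agrees with $g(\cdot,w)$ on real points, because $f$ and $\sigma$ extend their real versions. This gives the desired extension, and uniqueness follows from the identity theorem.

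For the ``in particular'' part, I take $r$ with $\rho_0<r<\rho$, which is possible since $\rho_0<\rho$. Then $B_\C(0,r)$ is an open neighborhood of $\overline{B_\C(0,\rho_0)}$ on which $g(\cdot,w)$ is holomorphic.

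The main subtlety is the supremum bookkeeping. One must ensure that a radius strictly larger than $r$ satisfies the strict inequality, and that strict versus non-strict distance comparisons keep the image inside the open set $\mathcal D_\sigma$. The monotonicity of $rL_f(r)$ resolves the first issue, and the strict inequality $<\rho_\sigma$ resolves the second. A lesser point is that the mean-value step requires the complex ball, not just the real segment, to lie within the region where the Jacobian bound holds. This is immediate from convexity of the ball.
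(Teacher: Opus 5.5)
Your proposal is correct and follows essentially the same route as the paper: monotonicity of $r\mapsto rW_{\max}^{(\ell)}L_f(r)$ makes the admissible set in Eq.~\eqref{eq:rho-eff} contain all of $(0,\rho)$. The integral bound along $t\mapsto tz$ then gives $|w^\top f(z)-w^\top f(0)|<\rho_\sigma$, so composing with $\sigma$ yields a holomorphic extension, and a radius in $(\rho_0,\rho)$ handles the closed ball. The only differences are cosmetic: your auxiliary $r'\in(r,\rho)$ is unnecessary because $L_f(r)$ is already a supremum over the closed ball of radius $r$, and your uniqueness aside is not needed for the lemma.
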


\begin{proof}
Fix $r\in(0,\rho)$. Since $L_f$ is nondecreasing on $(0,\rho_f)$, the map $r'\mapsto r'W_{\max}^{(\ell)}L_f(r')$ is nondecreasing. The set in Eq.~\eqref{eq:rho-eff} is therefore down-closed; it is nonempty because $r'W_{\max}^{(\ell)}L_f(r')\to 0$ as $r'\to 0^+$ while $\rho_\sigma>0$, so $\rho$ is well defined, and down-closedness gives that the set contains all of $(0,\rho)$. In particular, $rW_{\max}^{(\ell)}L_f(r)<\rho_\sigma$. For $z\in B_\C(0,r)$, holomorphy of $f$ and the fundamental theorem of calculus along $t\mapsto tz$ give
\begin{equation}
\begin{aligned}
    f(z)-f(0)&=\int_0^1 Jf(tz)\,z\,dt,\\
    \|f(z)-f(0)\|&\le rL_f(r).
\end{aligned}
\label{eq:fz-fzero}
\end{equation}
Therefore
\begin{equation}
\begin{aligned}
    |w^\top f(z)-w^\top f(0)|
    &\le \|w\|\,\|f(z)-f(0)\|\\
    &\le W_{\max}^{(\ell)}rL_f(r)<\rho_\sigma.
\end{aligned}
\label{eq:wfz-bound}
\end{equation}
Thus $w^\top f(z)$ lies in the disk on which $\sigma$ is holomorphic. The composition $\sigma(w^\top f(z))$ is holomorphic on $B_\C(0,r)$. Since $\rho_0<\rho$, choosing $r_1\in(\rho_0,\rho)$ gives holomorphy near $\overline{B_\C(0,\rho_0)}$.
\end{proof}

\begin{proof}[Proof of Theorem~\ref{thm:step1}]
Fix $j$ and $x\in\Omega$, set $s=\|x\|\le R$, and assume $s>0$; the case $s=0$ follows from Eq.~\eqref{eq:match} with $|\alpha|=0$. Choose $r_1\in(\rho_0,\rho)$. For fixed $w$, define $\varphi_w(t)=g(tx,w)$ for $|t|<r_1/s$. Lemma~\ref{lem:holo} makes $\varphi_w$ holomorphic on a disk containing $|t|\le\rho_0/s$. Its Taylor expansion along the complex line $tx$ is
\begin{equation}
    \varphi_w(t)=\sum_{n=0}^\infty a_n(w)t^n,\qquad
    a_n(w)=\sum_{|\alpha|=n}\frac{x^\alpha}{\alpha!}\partial_x^\alpha g(0,w).
    \label{eq:phi-expansion}
\end{equation}
Indeed, by the chain rule, $a_n(w)=\frac{1}{n!}\frac{d^n}{dt^n}g(tx,w)\big|_{t=0}=\frac{1}{n!}\bigl[(x\cdot\nabla_x)^n g\bigr](0,w)$, and expanding $(x\cdot\nabla_x)^n$ by the multinomial theorem gives Eq.~\eqref{eq:phi-expansion}; grouping by $|\alpha|=n$ likewise gives $\sum_{n=0}^{k}a_n(w)=T_kg(x,w)$, where $T_k g(x,w) \coloneqq \sum_{|\alpha|\le k}x^\alpha\partial_x^\alpha g(0,w)/\alpha!$.
On $|t|=\rho_0/s$, $|g(tx,w)|\le M_g$, so Cauchy's inequality gives
\begin{equation}
    |a_n(w)|\le M_g(s/\rho_0)^n.
    \label{eq:an-cauchy}
\end{equation}
Since $s<\rho_0$,
\begin{equation}
    |g(x,w)-T_kg(x,w)|
    \le
    M_g\sum_{n\ge k+1}(s/\rho_0)^n
    \le
    M_g\frac{\rho_0}{\rho_0-R}
    \left(\frac{R}{\rho_0}\right)^{k+1}
    =: E_k.
    \label{eq:single-neuron-tail}
\end{equation}
Summing the tail bound over neurons yields
\begin{equation}
\begin{aligned}
    |y_j(x)-T_ky_j(x)|&\le\|v_{:,j}\|_1E_k,\\
    |\tilde y_j(x)-T_k\tilde y_j(x)|&\le\|\tilde v_{:,j}\|_1E_k.
\end{aligned}
\label{eq:Tk-tail-bounds}
\end{equation}
Equation~\eqref{eq:match} implies $T_ky_j(x)=T_k\tilde y_j(x)$ after multiplying each matched derivative by $x^\alpha/\alpha!$ and summing over $|\alpha|\le k$. The triangle inequality gives Eq.~\eqref{eq:step1-bound}.
\end{proof}

\subsection{Existence of compressed weights}
This subsection shows that, starting from the matrix equation $V\Phi(W^{(\ell)})^\top=W^{(\ell+1)}\Phi(W^{(\ell)})^\top$, there exists a solution $V$ with \REV{at most $D_k$ nonzero columns}, which gives the compressed weights. The key is to apply a rank-reduction argument to the derivative-feature matrix $\Phi(W^{(\ell)})$.

\begin{lemma}[Rank reduction]
\label{lem:rank}
Let $Z=(z_1|\cdots|z_m)\in\R^{D\times m}$ have rank $n$. Then there are distinct indices $\kappa_1,\dots,\kappa_n\in[m]$ and a matrix $C\in\R^{n\times m}$ such that, with $\mathcal K=\{\kappa_1,\dots,\kappa_n\}$ and $Z_{\mathcal K}=(z_{\kappa_1}|\cdots|z_{\kappa_n})$,
\begin{equation}
    Z=Z_{\mathcal K}C,\qquad C_{:,\kappa_r}=e_r\quad(r\in[n]),
    \label{eq:rank-factor}
\end{equation}
and
\begin{equation}
    \max_i\|C_{:,i}\|_1\le n.
    \label{eq:C-l1}
\end{equation}
If $n=0$, this means $\mathcal K=\varnothing$, $C\in\R^{0\times m}$, and $Z=0$.
\end{lemma}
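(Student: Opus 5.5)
Our plan is to use the classical maximal-volume (maximal determinant) selection argument, the same one that gives the Auerbach/barycentric-spanner bound. If $n=0$ then $Z=0$ and there is nothing to prove, so assume $n\ge 1$.

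Since $\rank Z=n$, we first fix a basis of the column space: choose $Q\in\R^{D\times n}$ with orthonormal columns spanning $\Im(Z)$. Write $Z=QY$ with $Y=Q^\top Z\in\R^{n\times m}$; then $Y$ also has rank $n$. For each $n$-subset $S=\{s_1<\dots<s_n\}\subseteq[m]$, consider the determinant $\det Y[:,S]$. Only finitely many subsets exist, and since $\rank Y=n$ at least one determinant is nonzero. We therefore pick $\mathcal K=\{\kappa_1,\dots,\kappa_n\}$ maximizing $|\det Y[:,\mathcal K]|>0$. Then $Y_{\mathcal K}:=Y[:,\mathcal K]$ is invertible, and we define $C\coloneqq Y_{\mathcal K}^{-1}Y\in\R^{n\times m}$. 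This gives $Z_{\mathcal K}C=QY_{\mathcal K}Y_{\mathcal K}^{-1}Y=QY=Z$, and $C_{:,\kappa_r}=Y_{\mathcal K}^{-1}Y_{:,\kappa_r}=e_r$, which establishes Eq.~\eqref{eq:rank-factor}.

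For the $\ell_1$ bound we apply Cramer's rule. The $r$-th entry of $C_{:,i}=Y_{\mathcal K}^{-1}Y_{:,i}$ is
\[
C_{r,i}=\det Y_{\mathcal K}^{(r\leftarrow i)}\big/\det Y_{\mathcal K},
\]
where $Y_{\mathcal K}^{(r\leftarrow i)}$ is $Y_{\mathcal K}$ with its $r$-th column replaced by $Y_{:,i}$. That replaced matrix is, up to a column permutation (sign), the submatrix $Y[:,S]$ for $S=(\mathcal K\setminus\{\kappa_r\})\cup\{i\}$. If $i\notin\mathcal K\setminus\{\kappa_r\}$, this $S$ is an $n$-subset, so maximality gives $|\det|\le|\det Y_{\mathcal K}|$. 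Otherwise the matrix has a repeated column and its determinant vanishes. In either case $|C_{r,i}|\le 1$, and summing over the $n$ rows yields $\|C_{:,i}\|_1\le n$, which is Eq.~\eqref{eq:C-l1}.

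The only delicate point is the reduction to a square, full-rank setting: determinants of $D\times n$ blocks are not defined when $D>n$. Passing to the coordinates $Y$ with respect to a basis of $\Im(Z)$ handles this. Any choice of basis works, since changing the basis multiplies every $n\times n$ minor by the same nonzero constant. This leaves both the maximizer and $C$ unchanged, because $C=Y_{\mathcal K}^{-1}Y$ is basis-invariant. The rest is routine bookkeeping.
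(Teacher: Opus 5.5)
Your proof is correct and follows the same argument as the paper: select a maximum-volume set of columns, then use Cramer's rule to get $|C_{r,i}|\le 1$ and hence $\|C_{:,i}\|_1\le n$. The only difference is how the system is made square. The paper also selects a row set $I$, maximizing $|\det Z[I,\mathcal K]|$ over all $n\times n$ submatrices and applying Cramer to $Z[I,\mathcal K]\,c_i=Z[I,i]$, whereas you pass to coordinates $Y=Q^\top Z$ in an orthonormal basis of $\Im(Z)$ and maximize over column subsets only. This is a cosmetic variation that yields the same $C$ for a given $\mathcal K$.
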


\begin{proof}
The case $n=0$ is immediate. For $n\ge1$, choose row and column sets $I\subseteq[D]$ and $\mathcal K=\{\kappa_1,\dots,\kappa_n\}\subseteq[m]$ with $|I|=|\mathcal K|=n$ such that $A\coloneqq Z[I,\mathcal K]$ is nonsingular and maximizes $|\det A|$ among all nonsingular $n\times n$ submatrices. Then $Z_{\mathcal K}$ has full column rank and $\Im(Z_{\mathcal K})=\Im(Z)$. Set $C\coloneqq Z_{\mathcal K}^+Z$. Since $Z_{\mathcal K}Z_{\mathcal K}^+$ is the projector onto $\Im(Z)$, $Z=Z_{\mathcal K}C$, and selected columns satisfy $C_{:,\kappa_r}=e_r$.

For $i\notin\mathcal K$, the vector $c_i=C_{:,i}$ is the unique solution of $Z_{\mathcal K}c_i=z_i$. Restricting to rows $I$ gives $Ac_i=Z[I,i]$. By Cramer's rule,
\begin{equation}
    (c_i)_r=\frac{\det(A^{(r\leftarrow i)})}{\det A},
    \label{eq:cramer}
\end{equation}
where $A^{(r\leftarrow i)}$ replaces the $r$-th column of $A$ by $Z[I,i]$. The maximum-volume choice gives $|\det(A^{(r\leftarrow i)})|\le|\det A|$, so $|(c_i)_r|\le1$ for every $r$ and $\|c_i\|_1\le n$. For selected columns the bound is $\|e_r\|_1=1\le n$.
\end{proof}

\begin{proposition}[Existence of compressed weights]
\label{prop:compressed-V}
There exist $\mathcal K=\{\kappa_1,\dots,\kappa_n\}\subseteq[d_\ell]$ with \REV{$n\le D_k$} and a matrix $V\in\R^{d_{\ell+1}\times d_\ell}$ whose columns outside $\mathcal K$ are zero such that
\begin{equation}
    V\Phi(W^{(\ell)})^\top
    =
    W^{(\ell+1)}\Phi(W^{(\ell)})^\top.
    \label{eq:V-identity}
\end{equation}
Equivalently, the compressed measure $\tilde\mu=\sum_{r=1}^n\delta_{(\tilde v_{\kappa_r},w_{\kappa_r})}$ with $\tilde v_{\kappa_r}\coloneqq V_{:,\kappa_r}$ satisfies the derivative-matching condition Eq.~\eqref{eq:match}.
\end{proposition}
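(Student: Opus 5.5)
We apply Lemma~\ref{lem:rank} to the derivative-feature matrix $Z\coloneqq\Phi(W^{(\ell)})\in\R^{D_k\times d_\ell}$ from Eq.~\eqref{eq:Phi-stacked}. Its rank $n$ satisfies $n\le\min(D_k,d_\ell)\le D_k$, because $Z$ has only $D_k$ rows. Lemma~\ref{lem:rank} then supplies distinct indices $\mathcal K=\{\kappa_1,\dots,\kappa_n\}\subseteq[d_\ell]$ and a coefficient matrix $C\in\R^{n\times d_\ell}$ with $Z=Z_{\mathcal K}C$. In words, every feature column $\Phi(w_i)$ is a linear combination $\sum_{r}C_{r,i}\Phi(w_{\kappa_r})$ of the selected columns.

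Next we define the reweighted outgoing matrix. Set $\widetilde W\coloneqq W^{(\ell+1)}C^\top\in\R^{d_{\ell+1}\times n}$, and let $V\in\R^{d_{\ell+1}\times d_\ell}$ be the matrix with $V_{:,\kappa_r}\coloneqq\widetilde W_{:,r}$ and zero columns outside $\mathcal K$. Equivalently, $V=\widetilde W\,S$, where $S\in\R^{n\times d_\ell}$ is the selection matrix with rows $e_{\kappa_r}^\top$. Since $SZ^\top=Z_{\mathcal K}^\top$, we get
\begin{equation*}
    V\Phi(W^{(\ell)})^\top=W^{(\ell+1)}C^\top Z_{\mathcal K}^\top=W^{(\ell+1)}(Z_{\mathcal K}C)^\top=W^{(\ell+1)}Z^\top ,
\end{equation*}
which is exactly Eq.~\eqref{eq:V-identity}.

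For the equivalent form, read Eq.~\eqref{eq:V-identity} row by row in $\alpha$. The entry of $W^{(\ell+1)}\Phi(W^{(\ell)})^\top$ in column $\alpha$ is $\sum_i v_i\,\partial_x^\alpha g(0,w_i)=\langle v\,\partial_x^\alpha g(0,w)\rangle_\mu$. The corresponding entry on the left is $\sum_r V_{:,\kappa_r}\partial_x^\alpha g(0,w_{\kappa_r})=\langle v\,\partial_x^\alpha g(0,w)\rangle_{\tilde\mu}$, since the zero columns of $V$ contribute nothing. Hence Eq.~\eqref{eq:match} holds for all $|\alpha|\le k$.

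There is no real obstacle, since the argument is linear algebra once Lemma~\ref{lem:rank} is available. The only points to check are that $n\le D_k$ (immediate from the row count) and the degenerate case $n=0$. In that case $Z=0$, and $V=0$ works with $\mathcal K=\varnothing$.
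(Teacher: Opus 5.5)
Your proposal is correct and matches the paper's proof: you apply Lemma~\ref{lem:rank} to $Z=\Phi(W^{(\ell)})$ and place the columns of $W^{(\ell+1)}C^\top$ at the selected indices $\kappa_r$. This is the same as the paper's row-wise definition $\tilde a^{(j)}=Ca^{(j)}$, written in matrix form. You also read off the matching condition entrywise in the same way, and you state the bound $n\le D_k$ and the case $n=0$ explicitly, which the paper leaves implicit.
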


\begin{proof}
Apply Lemma~\ref{lem:rank} to $Z=[\Phi(w_1)|\cdots|\Phi(w_{d_\ell})]$. For each output row $j$, let $a^{(j)}\in\R^{d_\ell}$ be the $j$-th row of $W^{(\ell+1)}$ as a column vector and set $\tilde a^{(j)}=Ca^{(j)}\in\R^n$. Define $V_{j,\kappa_r}=\tilde a^{(j)}_r$ and $V_{j,i}=0$ for $i\notin\mathcal K$. Then the $j$-th row of $V\Phi(W^{(\ell)})^\top$ equals $(Z_{\mathcal K}Ca^{(j)})^\top=(Za^{(j)})^\top$, which is the $j$-th row of $W^{(\ell+1)}\Phi(W^{(\ell)})^\top$. Finally, Eq.~\eqref{eq:V-identity} coincides entrywise with Eq.~\eqref{eq:match}: for each $j\in[d_{\ell+1}]$ and each $|\alpha|\le k$, the corresponding entries of the two sides are $\sum_{i=1}^{d_\ell}V_{ji}\,\partial_x^\alpha g(0,w_i)=\bigl(\langle v\,\partial_x^\alpha g(0,w)\rangle_{\tilde\mu}\bigr)_j$ and $\sum_{i=1}^{d_\ell}W^{(\ell+1)}_{ji}\,\partial_x^\alpha g(0,w_i)=\bigl(\langle v\,\partial_x^\alpha g(0,w)\rangle_{\mu}\bigr)_j$.
\end{proof}

\begin{corollary}[Norm inflation due to compression]
\label{cor:l1}
With $\mathcal K,V$ as in Proposition~\ref{prop:compressed-V}, define $\widetilde W^{(\ell)}_{r,:}=w_{\kappa_r}^\top$ and $\widetilde W^{(\ell+1)}_{:,r}=V_{:,\kappa_r}$. Then the compressed network has width \REV{$n\le D_k$} and
\begin{equation}
    \|\widetilde W^{(\ell+1)}_{j,:}\|_1\le nV_{\max}\le D_kV_{\max}
    \qquad(j\in[d_{\ell+1}]).
    \label{eq:tilde-B2}
\end{equation}
\end{corollary}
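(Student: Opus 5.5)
The bound follows from the explicit form of $V$ built in Proposition~\ref{prop:compressed-V}, together with the column bound Eq.~\eqref{eq:C-l1} of Lemma~\ref{lem:rank}. First, the width claim is immediate. $\widetilde W^{(\ell)}$ has exactly $n=|\mathcal K|$ rows, and $n=\rank\Phi(W^{(\ell)})\le D_k$ because $\Phi(W^{(\ell)})\in\R^{D_k\times d_\ell}$ has only $D_k$ rows.

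For the norm bound, fix an output row $j\in[d_{\ell+1}]$ and let $a^{(j)}$ be the $j$-th row of $W^{(\ell+1)}$, viewed as a column vector. By construction, $\widetilde W^{(\ell+1)}_{j,:}=(Ca^{(j)})^\top$, with $C\in\R^{n\times d_\ell}$ the coefficient matrix from Lemma~\ref{lem:rank}. I would expand $Ca^{(j)}=\sum_{i=1}^{d_\ell}a^{(j)}_i C_{:,i}$ and apply the triangle inequality in $\ell_1$:
\begin{equation*}
    \|Ca^{(j)}\|_1\;\le\;\sum_{i=1}^{d_\ell}|a^{(j)}_i|\,\|C_{:,i}\|_1\;\le\;\Bigl(\max_i\|C_{:,i}\|_1\Bigr)\,\|a^{(j)}\|_1 .
\end{equation*}
In words, the $\ell_1\to\ell_1$ operator norm of $C$ is its maximal column $\ell_1$-norm. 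By Eq.~\eqref{eq:C-l1} that maximum is at most $n$, and by Assumption~\ref{asm:B2} we have $\|a^{(j)}\|_1\le V_{\max}$. This gives $nV_{\max}$, and $n\le D_k$ gives the second inequality.

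There is essentially no obstacle. The only point needing care is that the right quantity is the \emph{column} $\ell_1$ bound on $C$: it is exactly what controls row $\ell_1$-norms of the product $W^{(\ell+1)}C^\top$, and the max-volume (Cramer) argument in Lemma~\ref{lem:rank} supplies it. The degenerate case $n=0$ gives the empty matrix, so the bound holds trivially.
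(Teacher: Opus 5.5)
Your proposal is correct and follows the paper's proof essentially verbatim: the width bound comes from $n=\rank\Phi(W^{(\ell)})\le D_k$, and the norm bound from expanding $Ca^{(j)}=\sum_i a^{(j)}_i C_{:,i}$, applying the $\ell_1$ triangle inequality, and invoking the column bound Eq.~\eqref{eq:C-l1} together with $\|a^{(j)}\|_1\le V_{\max}$.
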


\begin{proof}
The width bound is Proposition~\ref{prop:compressed-V}. For the norm bound, $\widetilde W^{(\ell+1)}_{j,:}=Ca^{(j)}$, so
\begin{equation}
\begin{aligned}
    \|Ca^{(j)}\|_1
    &\le\sum_{i=1}^{d_\ell}|a_i^{(j)}|\,\|C_{:,i}\|_1\\
    &\le n\|a^{(j)}\|_1
    \le nV_{\max}.
\end{aligned}
\label{eq:Cajbound}
\end{equation}
\end{proof}

\subsection{Rank reduction algorithm}

Algorithm~\ref{alg:compression} below turns the proof of Proposition~\ref{prop:compressed-V} into an explicit procedure.

\begin{algorithm}
\caption{Intermediate-layer compression via rank reduction}\label{alg:compression}
\begin{algorithmic}[1]
\Require Weight matrix $W^{(\ell)}\in\R^{d_\ell\times d_{\ell-1}}$ with rows $w_i^\top$; weight matrix $W^{(\ell+1)}\in\R^{d_{\ell+1}\times d_\ell}$; derivative-matching order $k\in\N_0$; derivative feature map $\Phi:\R^{d_{\ell-1}}\to\R^D$ of Eq.~\eqref{eq:Phi-def} with $D=\binom{\din+k}{\din}$.
\Ensure $\mathcal K=\{\kappa_1,\dots,\kappa_n\}\subseteq[d_\ell]$ with \REV{$n\le D$}, matrices $\widetilde W^{(\ell)}\in\R^{n\times d_{\ell-1}}$ and $\widetilde W^{(\ell+1)}\in\R^{d_{\ell+1}\times n}$ such that Eq.~\eqref{eq:V-identity} holds and Eq.~\eqref{eq:tilde-B2} holds.
\State $Z \gets \bigl[\,\Phi(w_1)\,\bigm|\,\Phi(w_2)\,\bigm|\,\cdots\,\bigm|\,\Phi(w_{d_\ell})\,\bigr]\in\R^{D\times d_\ell}$
\State $n \gets \rank(Z)$ \Comment{e.g.\ via SVD or rank-revealing QR}
\If{$n = 0$}
    \State \Return $\mathcal K\gets\varnothing$, $\widetilde W^{(\ell)}\gets\mathbf 0_{0\times d_{\ell-1}}$, $\widetilde W^{(\ell+1)}\gets\mathbf 0_{d_{\ell+1}\times 0}$
\EndIf
\State $(I^\star,\mathcal K^\star)\gets\displaystyle\arg\max_{\substack{I\subseteq[D],\,\mathcal K\subseteq[d_\ell]\\ |I|=|\mathcal K|=n}}\bigl|\det Z[I,\mathcal K]\bigr|$ \Comment{maximum-volume $n\times n$ submatrix}
\State $\mathcal K\gets\mathcal K^\star=\{\kappa_1,\dots,\kappa_n\}$
\State $Z_\mathcal K\gets\bigl[\,\Phi(w_{\kappa_1})\,\bigm|\,\cdots\,\bigm|\,\Phi(w_{\kappa_n})\,\bigr]\in\R^{D\times n}$
\State $C\gets Z_\mathcal K^+\,Z\in\R^{n\times d_\ell}$ \Comment{e.g.\ via thin QR of $Z_\mathcal K$}
\State $\widetilde W^{(\ell+1)}\gets W^{(\ell+1)}\,C^\top\in\R^{d_{\ell+1}\times n}$
\State $\widetilde W^{(\ell)}\gets\bigl[\,w_{\kappa_1}\,\bigm|\,\cdots\,\bigm|\,w_{\kappa_n}\,\bigr]^\top\in\R^{n\times d_{\ell-1}}$ \Comment{rows of $W^{(\ell)}$ indexed by $\mathcal K$}
\State \Return $\mathcal K$, $\widetilde W^{(\ell)}$, $\widetilde W^{(\ell+1)}$
\end{algorithmic}
\end{algorithm}

\begin{proposition}[Correctness of Algorithm~\ref{alg:compression}]
\label{prop:alg-correctness}
The output of Algorithm~\ref{alg:compression} satisfies Eq.~\eqref{eq:V-identity}. In addition,
\begin{equation}
    \max_{j\in[d_{\ell+1}]}
    \|\widetilde W^{(\ell+1)}_{j,:}\|_1
    \le nV_{\max}\le D_kV_{\max}.
\end{equation}
\end{proposition}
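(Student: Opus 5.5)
The plan is to show that Algorithm~\ref{alg:compression} is a literal implementation of the construction in the proofs of Lemma~\ref{lem:rank}, Proposition~\ref{prop:compressed-V} and Corollary~\ref{cor:l1}, and then to inherit both conclusions from those results.

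\textbf{Degenerate case.} If $n=\rank(Z)=0$, then $Z=0$. Hence $\Phi(W^{(\ell)})^\top=0$, and both sides of Eq.~\eqref{eq:V-identity} vanish, with $V=0$ (no retained columns). The norm bound is vacuous, since the rows of $\widetilde W^{(\ell+1)}$ are empty.

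\textbf{Nondegenerate case: identity.} For $n\ge1$, the argmax in the algorithm ranges over a finite nonempty set: nonsingular $n\times n$ submatrices exist because $\rank Z=n$, and the maximizer has $|\det|>0$. This is therefore exactly the maximum-volume choice $(I,\mathcal K)$ used in the proof of Lemma~\ref{lem:rank}. That proof then gives the following:
\begin{itemize}
\item $Z_{\mathcal K}$ has full column rank and $\Im(Z_{\mathcal K})=\Im(Z)$;
\item $C=Z_{\mathcal K}^+Z$ satisfies $Z=Z_{\mathcal K}C$;
\item $C_{:,\kappa_r}=e_r$;
\item every column of $C$ has entries bounded by $1$ in absolute value (Cramer's rule with the max-volume property), so $\|C_{:,i}\|_1\le n$.
\end{itemize}
Next, the algorithm sets $\widetilde W^{(\ell+1)}=W^{(\ell+1)}C^\top$. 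Its $j$-th row, as a column vector, is $Ca^{(j)}$, where $a^{(j)}$ is the $j$-th row of $W^{(\ell+1)}$. This is precisely the $\tilde a^{(j)}$ of Proposition~\ref{prop:compressed-V}. Embed $\widetilde W^{(\ell+1)}$ into $V\in\R^{d_{\ell+1}\times d_\ell}$ by placing column $r$ at index $\kappa_r$ and zeros elsewhere. Then
\[
V\Phi(W^{(\ell)})^\top=\widetilde W^{(\ell+1)}Z_{\mathcal K}^\top=W^{(\ell+1)}C^\top Z_{\mathcal K}^\top=W^{(\ell+1)}(Z_{\mathcal K}C)^\top=W^{(\ell+1)}Z^\top,
\]
which is Eq.~\eqref{eq:V-identity}.

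\textbf{Nondegenerate case: norm bound.} Repeat the computation of Corollary~\ref{cor:l1}:
\[
\|Ca^{(j)}\|_1\le\sum_i|a^{(j)}_i|\,\|C_{:,i}\|_1\le n\|a^{(j)}\|_1\le nV_{\max}.
\]
Since $n=\rank Z\le D=D_k$ (because $Z$ has $D_k$ rows), this gives $nV_{\max}\le D_kV_{\max}$.

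\textbf{Where care is needed.} The only step requiring attention is that the computed $C$ coincides with the one analyzed in the lemma. The lemma's entrywise bound uses the fact that $c_i$ is the \emph{unique} solution of $Z_{\mathcal K}c_i=z_i$. This holds because $z_i\in\Im(Z)=\Im(Z_{\mathcal K})$ and $Z_{\mathcal K}$ is injective, so the pseudoinverse returns exactly that solution; any implementation via thin QR also yields it. The argument is in exact arithmetic; numerical rank determination is outside the claim.
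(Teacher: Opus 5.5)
Your proposal is correct and takes the same route as the paper. You dispose of $n=0$ directly, then observe that for $n\ge 1$ the algorithm literally implements the maximum-volume construction of Lemma~\ref{lem:rank} and the weight definition of Proposition~\ref{prop:compressed-V}, so Eq.~\eqref{eq:V-identity} and the $\ell_1$ bound of Corollary~\ref{cor:l1} carry over; you also spell out the matrix identity and the uniqueness of $C=Z_{\mathcal K}^+Z$, which the paper leaves implicit. One minor wording point: for $n=0$ the norm bound is not vacuous but holds trivially, since each empty row has $\ell_1$-norm $0\le 0\cdot V_{\max}$.
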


\begin{proof}
For $n=0$, $Z=0$ and both sides of Eq.~\eqref{eq:V-identity} vanish. For $n\ge1$, Algorithm~\ref{alg:compression} implements the maximum-volume construction in Lemma~\ref{lem:rank} and the weight definition in Proposition~\ref{prop:compressed-V}; thus Eq.~\eqref{eq:V-identity} holds. The $\ell_1$ bound is exactly Corollary~\ref{cor:l1}.
\end{proof}

\begin{REV*}
Once the derivative-feature matrix $Z$ is available, the arithmetic cost of Algorithm~\ref{alg:compression} simplifies considerably in the rank-saturated case $n=D<d_\ell$. It is
\begin{equation}
\label{eq:alg-cost}
    \underbrace{O\!\left(\binom{d_\ell}{D}D^3\right)}_{\text{maximum-volume search}}
    +
    \underbrace{O\!\left(d_{\ell+1}d_\ell D\right)}_{\text{outgoing-weight update}}.
\end{equation}
Indeed, the rank-factorization and pseudoinverse work is $O(D^2d_\ell)$ and is absorbed by the exhaustive-search term under $D<d_\ell$. The maximum-volume search is combinatorial and infeasible even for modest $D$ and $d_\ell$. It is needed here to obtain the uniform coefficient bound in Eq.~\eqref{eq:C-l1}; practical implementations can instead use polynomial-time rank-revealing column selection. Finding the rank in advance is also unnecessary when one directly targets the upper bound $D$.

The cost of forming $Z$ by evaluating the required derivatives is harder to characterize: it depends on the input-to-layer subnetwork, activation, differentiation method, and implementation-level reuse. We therefore do not assign it an implementation-independent arithmetic bound. Figure~\ref{fig:compression-runtime-scaling} instead reports the measured wall-clock scaling of the complete practical compression pipeline used in our experiments.

\begin{figure}[t]
    \centering
    \includegraphics[width=0.85\textwidth]{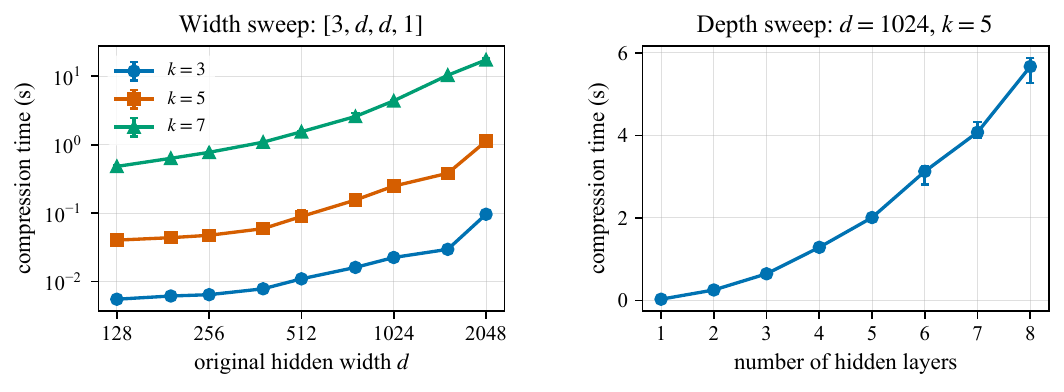}
    \caption{End-to-end runtime of backward compression with the practical pivoted-QR implementation. The timed region includes derivative-feature evaluation, basis selection, the coefficient solve, weight reconstruction for every hidden layer, and the implementation's numerical diagnostics; teacher construction and a small library warm-up are excluded. We use randomly initialized float64 sigmoid networks on CPU with one Torch/BLAS thread. Points are medians of three complete compressions and bars span their minima and maxima. Left: teachers $[3,d,d,1]$ for $k=3,5,7$. Right: teachers $[3,1024,\ldots,1024,1]$ with one through eight hidden layers at $k=5$. }
    \label{fig:compression-runtime-scaling}
\end{figure}
\end{REV*}

\subsection{The main theorem for single-layer compression}

\begin{proof}[\REV{Proof of Theorem~\ref{thm:main}}]
By Proposition~\ref{prop:compressed-V} and Corollary~\ref{cor:l1}, there are compressed matrices with \REV{$n\le D_k$} satisfying Eq.~\eqref{eq:match} and $\max_j\|\widetilde W^{(\ell+1)}_{j,:}\|_1\le D_kV_{\max}$. Theorem~\ref{thm:step1} gives
\begin{equation}
\begin{aligned}
    |y_j(x)-\tilde y_j(x)|
    &\le (1+D_k)C_0e^{-(k+1)\lambda}\\
    &\le 2D_kC_0e^{-(k+1)\lambda},
\end{aligned}
\label{eq:thm-main-bound}
\end{equation}
because $D_k\ge 1$. Since
\begin{equation}
\begin{aligned}
    D_k=\binom{\din+k}{\din}
    &=\frac{\prod_{r=1}^{\din}(k+r)}{\din!}\\
    &\le\frac{(k+\din)^{\din}}{\din!},
\end{aligned}
\label{eq:binom-bound}
\end{equation}
Eq.~\eqref{eq:k-cond} implies the right-hand side is at most $\varepsilon_\ell$, proving Eq.~\eqref{eq:thm-bound}.

For the asymptotics, put $A=\log(2C_0/\varepsilon_\ell)$ and
$h(k)=(k+1)\lambda-\din\log(k+\din)+\log(\din!)-A$. The function $h$ is eventually increasing and tends to infinity, so the minimal feasible $k^\star$ exists. The feasibility inequality implies $k^\star\ge A/\lambda-O(\log A)$, while choosing
$k=\lceil A/\lambda\rceil+\lceil\din\log(A/\lambda+\din)/\lambda\rceil$ is feasible for large $A$. Hence
$k^\star=A/\lambda+O(\log A)$, which is Eq.~\eqref{eq:kstar-asymp}. Substituting this into $D_k\le(k+\din)^{\din}/\din!$ gives Eq.~\eqref{eq:dprime-asymp}.
\end{proof}

\section{Multi-layer composition}
\label{app:composition}

Fix a permutation $(\ell_1,\dots,\ell_{L-1})$ of $\{1,\dots,L-1\}$ (the \emph{compression order}) and per-step budgets $\varepsilon_1,\dots,\varepsilon_{L-1}>0$. Write $F=F^{(0)}$ for the original network; let $F^{(t)}$ be obtained from $F^{(t-1)}$ by applying Algorithm~\ref{alg:compression} to layer $\ell_t$ with matching order $k_t\coloneqq k^\star(\varepsilon_t)$ computed from the constants of $F^{(t-1)}$ via Theorem~\ref{thm:main}, and $\widetilde F\coloneqq F^{(L-1)}$. We write $y^{(\ell+1)}\coloneqq W^{(\ell+1)}h^{(\ell)}$ for the pre-activation entering layer $\ell+1$ (so $y^{(L)}=y_{\mathrm{out}}$, and the downstream map of layer $L-1$ is the identity). Each constant $X^{(\ell)}$ from Section~\ref{sec:setup} carries an extra superscript $X^{(\ell,t)}$ when evaluated for layer $\ell$ in $F^{(t)}$ (so $W_{\max}^{(\ell,t)}$, $V_{\max}^{(\ell+1,t)}$, $\rho^{(\ell,t)}$, $C_0^{(\ell,t)}$, $\lambda^{(\ell,t)}$, where the upper index of $V_{\max}$ is its \emph{matrix} index $\ell+1$ and the others are indexed by the layer being compressed). Throughout this appendix we treat the depth $L$ as a fixed constant; suppressed constants in the deep theorem are allowed to depend on $L$.

\begin{assumption}[Globally Lipschitz $\sigma$]\label{asm:Lsigma}
$\sigma$ is globally $L_\sigma$-Lipschitz on $\R$ for some finite $L_\sigma$.
\end{assumption}

This is genuinely stronger than Assumption~\ref{asm:holo}: it excludes entire activations such as $\exp$, which are admissible in Sections~\ref{sec:setup}--\ref{sec:deep} via $\rho_\sigma=\infty$. Throughout this section we add Assumption~\ref{asm:Lsigma} to the standing assumptions.

\begin{lemma}[Downstream Lipschitz constant]
\label{lem:downstream-lip}
For any $\ell\in\{1,\dots,L-1\}$ and $t\ge0$, the Lipschitz constant $\Lambda_{>\ell}^{(t)}$ of the map $y^{(\ell+1)}\mapsto y_{\mathrm{out}}$ in $F^{(t)}$, in the $\ell_\infty$-to-$\ell_\infty$ sense, satisfies
\begin{equation}
    \Lambda_{>\ell}^{(t)}
    \le
    L_\sigma^{\,L-\ell-1}
    \prod_{j=\ell+2}^{L}\bigl\|W^{(j,t)}\bigr\|_{\mathrm{op},\infty}.
    \label{eq:downstream-lip}
\end{equation}
Here $\bigl\|A\bigr\|_{\mathrm{op},\infty}\coloneqq\sup_{u\neq 0}\|Au\|_\infty/\|u\|_\infty=\max_i\sum_j|A_{ij}|$ is the $\ell_\infty\!\to\!\ell_\infty$ induced (operator) norm, i.e.\ the maximum absolute row sum. In particular $\Lambda_{>L-1}^{(t)}=1$, the downstream map of the last hidden layer being the identity.
\end{lemma}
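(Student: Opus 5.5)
The plan is to write the downstream map $y^{(\ell+1)}\mapsto y_{\mathrm{out}}$ in $F^{(t)}$ as an alternating composition of coordinatewise activations and linear maps,
\begin{equation}
    y^{(\ell+1)}\;\mapsto\;\sigma(y^{(\ell+1)})=h^{(\ell+1)}\;\mapsto\;W^{(\ell+2,t)}h^{(\ell+1)}=y^{(\ell+2)}\;\mapsto\;\cdots\;\mapsto\;W^{(L,t)}h^{(L-1)}=y_{\mathrm{out}},
\end{equation}
and then bound the $\ell_\infty\!\to\!\ell_\infty$ Lipschitz constant of the composition by the product of the constants of its factors. The factors are the activations applied at layers $\ell+1,\dots,L-1$, which gives $L-\ell-1$ of them, and the linear maps $W^{(j,t)}$ for $j=\ell+2,\dots,L$. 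The base case $\ell=L-1$ is immediate: there are no activations and no matrices downstream, since $y^{(L)}=y_{\mathrm{out}}$. The map is therefore the identity, with constant $1$, which agrees with the empty product in Eq.~\eqref{eq:downstream-lip} and with $L_\sigma^0=1$.

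There are two elementary factor bounds.
\begin{itemize}
    \item \emph{Activations.} For the coordinatewise map $u\mapsto\sigma(u)$, Assumption~\ref{asm:Lsigma} gives $|\sigma(u_i)-\sigma(u_i')|\le L_\sigma|u_i-u_i'|$ for each coordinate. Taking the maximum over $i$ yields $\|\sigma(u)-\sigma(u')\|_\infty\le L_\sigma\|u-u'\|_\infty$, so the $\ell_\infty$ constant is $L_\sigma$, independent of the width. This is exactly why the sup-norm is the convenient choice.
    \item \emph{Linear maps.} For $u\mapsto Au$ the constant is $\|A\|_{\mathrm{op},\infty}$ by definition. I would still verify the max-row-sum formula $|(Au)_i|\le\sum_j|A_{ij}|\,\|u\|_\infty$, with equality attained at $u_j=\operatorname{sign}A_{i^\ast j}$.
\end{itemize}

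Lipschitz constants are submultiplicative under composition. The cleanest formalization is a backward induction on $\ell$ from $L-1$ down to $1$, using the recursion $\Lambda_{>\ell}^{(t)}\le\Lambda_{>\ell+1}^{(t)}\cdot\|W^{(\ell+2,t)}\|_{\mathrm{op},\infty}\cdot L_\sigma$. This holds because $y_{\mathrm{out}}$ as a function of $y^{(\ell+1)}$ equals the layer-$(\ell+1)$ downstream map applied to $W^{(\ell+2,t)}\sigma(y^{(\ell+1)})$.

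I do not expect a real obstacle here. The only points needing care are bookkeeping ones:
\begin{itemize}
    \item In $F^{(t)}$ the widths may have changed and the matrices are the compressed ones, which is why $W^{(j,t)}$ appears. The argument is width-agnostic.
    \item If biases are absorbed via a constant coordinate, that coordinate is not passed through $\sigma$ and is fixed at $1$. It therefore contributes zero difference, and the bound is unaffected.
    \item The statement gives an upper bound on the best Lipschitz constant, not an equality.
\end{itemize}
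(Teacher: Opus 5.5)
Your proposal is correct and follows essentially the same route as the paper's proof. Both write the downstream map as $L-\ell-1$ coordinatewise $L_\sigma$-Lipschitz activations alternating with the linear maps $W^{(j,t)}$, bound each linear factor by its $\ell_\infty\!\to\!\ell_\infty$ norm $\|W^{(j,t)}\|_{\mathrm{op},\infty}$, multiply the constants, and treat $\ell=L-1$ as the identity. Your explicit backward induction and your remark on the absorbed bias coordinate simply spell out what the paper's one-line composition argument leaves implicit.
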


\begin{proof}
The downstream map alternates $L-\ell-1$ applications of the elementwise $L_\sigma$-Lipschitz activation with the linear maps $W^{(j,t)}$ for $j=\ell+2,\dots,L$. The induced $\ell_\infty$ operator norm of each matrix is $\|W^{(j,t)}\|_{\mathrm{op},\infty}$, so composition yields Eq.~\eqref{eq:downstream-lip}. For $\ell=L-1$ both the activation count and the product are empty, giving $1$.
\end{proof}

\begin{theorem}[Total error of sequential compression]
\label{thm:total-error}
Suppose Theorem~\ref{thm:main} applies at every step $t$ with the constants of $F^{(t-1)}$. Then
\begin{equation}
    \sup_{x\in\Omega}\bigl\|\widetilde F(x)-F(x)\bigr\|_\infty
    \;\le\;
    \sum_{t=1}^{L-1}\Lambda_{>\ell_t}^{(t-1)}\,\varepsilon_t.
    \label{eq:total-err}
\end{equation}
\end{theorem}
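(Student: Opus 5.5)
The argument is a telescoping sum over the $L-1$ compression steps, with each step's error propagated to the output through the downstream Lipschitz map of the network in which that step is performed. Write
\[
\widetilde F-F=\sum_{t=1}^{L-1}\bigl(F^{(t)}-F^{(t-1)}\bigr).
\]
The triangle inequality then reduces Eq.~\eqref{eq:total-err} to the per-step bound
\[
\sup_{x\in\Omega}\|F^{(t)}(x)-F^{(t-1)}(x)\|_\infty\le\Lambda_{>\ell_t}^{(t-1)}\varepsilon_t .
\]

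Fix $t$ and write $\ell=\ell_t$. The first step is to compare the two networks $F^{(t-1)}$ and $F^{(t)}$ layer by layer:

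\begin{itemize}
\item They have identical weight matrices $W^{(j)}$ for $j\le\ell-1$. In particular, their upstream maps $f=h^{(\ell-1)}$ coincide.
\item Step $t$ only replaces $W^{(\ell)}$ by a row subset and $W^{(\ell+1)}$ by its reweighted version.
\item All matrices $W^{(j)}$ with $j\ge\ell+2$ are untouched.
\end{itemize}

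Consequently the two networks differ only through the pre-activation $y^{(\ell+1)}(x)$, written $y(x)$ in $F^{(t-1)}$ and $\tilde y(x)$ in $F^{(t)}$. Downstream of it, both networks apply the same map $G:y^{(\ell+1)}\mapsto y_{\mathrm{out}}$, namely $G$ as built in $F^{(t-1)}$.

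Next, apply Theorem~\ref{thm:main} to layer $\ell$ of $F^{(t-1)}$, using its constants and $k_t=k^\star(\varepsilon_t)$. This is allowed by hypothesis, and since $k^\star$ satisfies Eq.~\eqref{eq:k-cond}, we get $\sup_x\|y(x)-\tilde y(x)\|_\infty\le\varepsilon_t$. Because $G$ is $\Lambda_{>\ell}^{(t-1)}$-Lipschitz from $\ell_\infty$ to $\ell_\infty$ (its Lipschitz constant as defined in Lemma~\ref{lem:downstream-lip}), it follows that
\[
\|F^{(t)}(x)-F^{(t-1)}(x)\|_\infty=\|G(\tilde y(x))-G(y(x))\|_\infty\le\Lambda_{>\ell}^{(t-1)}\varepsilon_t .
\]
Summing over $t$ completes the proof.

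The only delicate point is the bookkeeping in the second paragraph. One must check that the downstream map seen by the compressed pre-activation is exactly the one whose constant is $\Lambda_{>\ell_t}^{(t-1)}$. This holds because $W^{(\ell+1)}$ enters $y^{(\ell+1)}$ itself, not $G$, so reweighting it does not change the downstream map. The same observation applies for an arbitrary order: even if $W^{(\ell+2)}$ was inflated at an earlier step, it was inflated already in $F^{(t-1)}$, and therefore the constant is taken in $F^{(t-1)}$. Beyond this, the argument is routine and uses nothing else.
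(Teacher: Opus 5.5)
Your proposal is correct and matches the paper's proof: it telescopes $\widetilde F-F$ over the steps and observes that $F^{(t-1)}$ and $F^{(t)}$ share both the upstream map and the downstream map $y^{(\ell_t+1)}\mapsto y_{\mathrm{out}}$ (so $\Lambda_{>\ell_t}^{(t-1)}=\Lambda_{>\ell_t}^{(t)}$). It then bounds the pre-activation perturbation by $\varepsilon_t$ via Theorem~\ref{thm:main}, propagates it with the downstream Lipschitz constant, and sums over $t$.
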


\begin{proof}
Telescope $\widetilde F-F=\sum_t\bigl(F^{(t)}-F^{(t-1)}\bigr)$. The upstream map of layer $\ell_t$ is identical in $F^{(t-1)}$ and $F^{(t)}$, so the two networks feed the same pre-activation input to layer $\ell_t$, and Theorem~\ref{thm:main} bounds the resulting perturbation at the pre-activation $y^{(\ell_t+1)}$ by $\varepsilon_t$ uniformly on $\Omega$. The layers downstream of $y^{(\ell_t+1)}$ coincide in $F^{(t-1)}$ and $F^{(t)}$ (so $\Lambda_{>\ell_t}^{(t-1)}=\Lambda_{>\ell_t}^{(t)}$), and Lemma~\ref{lem:downstream-lip} propagates the perturbation by $\Lambda_{>\ell_t}^{(t-1)}$. Summing gives Eq.~\eqref{eq:total-err}.
\end{proof}

\subsection{Forward vs.\ backward order}

\begin{proposition}[Backward order preserves the per-step constants]
\label{prop:backward}
Let $\ell_t=L-t$. For every $t$,
\begin{equation}
    W_{\max}^{(\ell_t,t-1)}=W_{\max}^{(\ell_t,0)},\qquad
    V_{\max}^{(\ell_t+1,t-1)}\le V_{\max}^{(\ell_t+1,0)},\qquad
    \rho^{(\ell_t,t-1)}=\rho^{(\ell_t,0)}.
    \label{eq:back-cnsts}
\end{equation}
Consequently, for a fixed working radius $\rho_0^{(\ell_t)}\in\bigl(R,\rho^{(\ell_t,0)}\bigr)$, one has $C_0^{(\ell_t,t-1)}\le C_0^{(\ell_t,0)}$ and $\lambda^{(\ell_t,t-1)}=\lambda^{(\ell_t,0)}$, and the predicted compressed width is no larger than the original-network prediction.
\end{proposition}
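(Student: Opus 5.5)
The proof is a bookkeeping argument on which weight matrices have been modified before step $t$ of the backward order. With $\ell_t=L-t$, steps $1,\dots,t-1$ have compressed layers $L-1,L-2,\dots,\ell_t+1$. Compressing a layer $\ell'$ does two things. It replaces $W^{(\ell')}$ by a row-submatrix, keeping rows indexed by $\mathcal K$ (Corollary~\ref{cor:l1}). It replaces $W^{(\ell'+1)}$ by $W^{(\ell'+1)}C^\top$, where $C$ is the coefficient matrix of Lemma~\ref{lem:rank}. So before step $t$, the only matrices that have changed are $W^{(j)}$ for $j\ge \ell_t+1$. The first move is to record this precisely by induction on $t$: every $W^{(j)}$ with $j\le \ell_t$ is unchanged, and $W^{(\ell_t+1)}$ has been changed only by row deletion, coming from the step that compressed layer $\ell_t+1$ (or it is untouched when $t=1$).

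The first equality in Eq.~\eqref{eq:back-cnsts} then follows immediately. $W_{\max}^{(\ell_t,t-1)}$ is a maximum of row norms of $W^{(\ell_t)}$, and that matrix is untouched. For $\rho$, note that $f=h^{(\ell_t-1)}$ depends only on $W^{(1)},\dots,W^{(\ell_t-1)}$, so $f$, $\rho_f$ and $L_f(\cdot)$ are the same in $F^{(t-1)}$ and $F^{(0)}$. The quantity $\rho_\sigma$ in Eq.~\eqref{eq:rhosigma} depends only on $f(0)$, $\sigma$ and $W_{\max}^{(\ell_t)}$, so it is also unchanged. Since Eq.~\eqref{eq:rho-eff} is built from these ingredients alone, $\rho^{(\ell_t,t-1)}=\rho^{(\ell_t,0)}$.

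For $V_{\max}$, each row of $W^{(\ell_t+1,t-1)}$ is a subvector of the corresponding row of the original $W^{(\ell_t+1)}$: the number of rows is reduced, but the surviving rows keep all $d_{\ell_t}$ columns and their entries. Deleting rows cannot increase the maximum row $\ell_1$-norm, since we maximize over a subset of rows, so $V_{\max}^{(\ell_t+1,t-1)}\le V_{\max}^{(\ell_t+1,0)}$. The inflated matrix is $W^{(\ell_t+2)}$, and it only enters later as a downstream Lipschitz factor, not in the step-$t$ constants.

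For the consequences, fix $\rho_0^{(\ell_t)}\in(R,\rho^{(\ell_t,0)})=(R,\rho^{(\ell_t,t-1)})$. Then $\lambda=\log(\rho_0/R)$ is unchanged. $M_g$ in Eq.~\eqref{eq:Mg-def} is a supremum of $|\sigma(w^\top f(z))|$ over $\|w\|\le W_{\max}$ and $\|z\|\le\rho_0$, so it is unchanged as well. Hence $C_0=V_{\max}M_g\rho_0/(\rho_0-R)$ can only decrease. Finally, the left side of Eq.~\eqref{eq:k-cond} is unchanged while the right side is nondecreasing in $C_0$, so any $k$ feasible for the original constants remains feasible. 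Therefore $k^\star$, and with it the cap $D_{k^\star}$, does not exceed the original-network prediction.

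The main obstacle is indexing care in the induction step: making sure the step-$(t-1)$ compression touched only $W^{(\ell_t+1)}$ by row selection, and did not reweight $W^{(\ell_t+1)}$ as an outgoing matrix. That reweighting happens exactly one index up, to $W^{(\ell_t+2)}$.
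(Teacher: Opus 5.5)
Your proof is correct and follows the paper's argument. Backward steps before $t$ only touch $W^{(j)}$ with $j\ge\ell_t+1$, so $W^{(\ell_t)}$ and the upstream map $f$ are unchanged, and hence so are $W_{\max}$, $L_f$, $\rho_\sigma$, $\rho$ and $\lambda$; meanwhile $W^{(\ell_t+1)}$ has only been row-subsetted, which cannot increase $V_{\max}$. You also make explicit two points the paper leaves implicit: that $M_g$ is unchanged, and that the right side of Eq.~\eqref{eq:k-cond} is monotone in $C_0$, which is what actually gives the width claim.
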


\begin{proof}
At step $t$, previous backward steps only modified layers with index greater than $\ell_t$. Thus $W^{(\ell_t)}$ and the upstream map $f$ entering layer $\ell_t$ are untouched, giving equality of $W_{\max}$, $L_f$, $\rho_\sigma$, and hence of $\rho$; since $\rho_0^{(\ell_t)}$ is held fixed, $\lambda=\log\bigl(\rho_0^{(\ell_t)}/R\bigr)$ is unchanged. The matrix $W^{(\ell_t+1)}$ may have been row-subsetted by the immediately previous step, so its row $\ell_1$-norms cannot increase; this gives the bound on $V_{\max}^{(\ell_t+1,\cdot)}$. The claim for $C_0$ follows from Eq.~\eqref{eq:C0-def}.
\end{proof}

\begin{proposition}[Forward order can inflate incoming weights]
\label{prop:forward}
Let $\ell_t=t$, and let $k_{t-1}^\star$ be the matching order used at step $t-1$. For every $t\ge2$,
\begin{equation}
    W_{\max}^{(t,t-1)}\;\le\;D_{k_{t-1}^\star}\,V_{\max}^{(t,0)},
    \qquad D_{k_{t-1}^\star}=\binom{\din+k_{t-1}^\star}{\din}.
    \label{eq:fwd-blow}
\end{equation}
The right side depends recursively on previous compression widths and therefore on previous analytic constants.
\end{proposition}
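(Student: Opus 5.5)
The plan is to track exactly which matrices have changed by the start of step $t$ under the forward order $\ell_s=s$. At step $t-1$, Algorithm~\ref{alg:compression} is applied to layer $t-1$. It row-subsets $W^{(t-1)}$ and replaces the outgoing matrix $W^{(t)}$ by $\widetilde W^{(t)}=W^{(t,t-2)}C^\top$, which has fewer columns. Earlier steps $s<t-1$ only touched $W^{(s)}$ and $W^{(s+1)}$ with $s+1\le t-1$. So the matrix $W^{(t)}$ entering step $t-1$ is the original one, $W^{(t,t-2)}=W^{(t,0)}$, and the rows $w_i^{(t)}$ that serve as incoming weights of layer $t$ at step $t$ are exactly the rows of $\widetilde W^{(t)}$.

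Next I apply Corollary~\ref{cor:l1}, or equivalently Proposition~\ref{prop:alg-correctness}, to step $t-1$. The outgoing matrix there is $W^{(t,0)}$, with maximal row $\ell_1$-norm $V_{\max}^{(t,0)}$, and the retained width is $n\le D_{k_{t-1}^\star}$. This gives
\begin{equation*}
\max_i\|\widetilde W^{(t)}_{i,:}\|_1\le D_{k_{t-1}^\star}V_{\max}^{(t,0)}.
\end{equation*}

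Then I pass from $\ell_1$ to the Euclidean norm used in $W_{\max}$, via $\|u\|\le\|u\|_1$. This gives $W_{\max}^{(t,t-1)}=\max_i\|\widetilde W^{(t)}_{i,:}\|\le D_{k_{t-1}^\star}V_{\max}^{(t,0)}$, which is Eq.~\eqref{eq:fwd-blow}.

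The main obstacle is the bookkeeping step: confirming that $W^{(t)}$ was not already modified before step $t-1$, so that the row norm entering Corollary~\ref{cor:l1} is the original $V_{\max}^{(t,0)}$. This follows from the forward order, since step $s$ touches only layers $s$ and $s+1$. Finally, the recursive-dependence remark follows because $k_{t-1}^\star$ is computed by Theorem~\ref{thm:main} from the constants of $F^{(t-2)}$. Those include $W_{\max}^{(t-1,t-2)}$, which is itself inflated by the previous step, so the bound chains back through all earlier steps.
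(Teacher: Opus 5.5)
Your proposal is correct and follows the paper's proof essentially step for step. You note that forward steps $s\le t-2$ never touch $W^{(t)}$, so the relevant row-$\ell_1$ norm is $V_{\max}^{(t,0)}$; you apply Corollary~\ref{cor:l1} to the reweighting $W^{(t,t-2)}C^\top$ at step $t-1$; and you pass to the Euclidean norm via $\|u\|\le\|u\|_1$, with a justification of the recursive-dependence remark that matches the paper's.
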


\begin{proof}
At step $t-1$, Algorithm~\ref{alg:compression} replaces $W^{(t)}$ by $W^{(t,t-2)}C_{t-1}^\top$. Corollary~\ref{cor:l1} bounds the maximum row $\ell_1$-norm of the result by $D_{k_{t-1}^\star}\,V_{\max}^{(t,t-2)}$. Earlier forward steps do not touch $W^{(t)}$, so $V_{\max}^{(t,t-2)}=V_{\max}^{(t,0)}$. Since $\|u\|_2\le\|u\|_1$, Eq.~\eqref{eq:fwd-blow} follows.
\end{proof}

Two further obstacles distinguish forward order from backward order: (i) the partial map $f$ entering layer $\ell_t$ is the composition of \emph{already-compressed} upstream layers, so the analytic constants $\rho_f^{(\ell_t,t-1)}$, $L_f^{(\ell_t,t-1)}$ entering Eq.~\eqref{eq:rho-eff} differ in general from their original values and must be re-verified; (ii) the inflated $W_{\max}^{(t,t-1)}$ can shrink $\rho^{(t,t-1)}$ via Eq.~\eqref{eq:rho-eff}, hence increase $C_0^{(t,t-1)}$ and $D_{k_t^\star}$. We do not solve this recursion. Heuristically, since $W_{\max}$ enters $\rho$ only through a logarithm, each $D_{k_t^\star}$ is expected to remain polylogarithmic in $1/\varepsilon$, but with constants strictly larger than the original-network values used in Proposition~\ref{prop:backward}. The remainder of this section therefore works exclusively in backward order.

\subsection{Backward compression error bound}

% \paragraph{Why we do not optimize the allocation.}
% The deep theorem is an \emph{existence} statement: it asserts that a same-depth network of polylogarithmic width reproduces $F$ to accuracy $\varepsilon$. For this we need only exhibit \emph{one} feasible budget allocation whose induced widths are polylogarithmic; we do not need the allocation that minimizes the width. This is fortunate, because the compressed width depends on each per-layer budget only through its logarithm (Theorem~\ref{thm:main}), so all balanced allocations agree at leading order (Remark~\ref{rem:robust}); and because the depth $L$ is fixed, a uniform split incurs only an $L$-dependent constant. We therefore use the uniform allocation throughout and make no optimality claim.

\paragraph{An a priori downstream bound.}
In backward order the downstream matrices have \emph{already} been compressed when layer $\ell_t$ is processed, so the realized downstream constant
\begin{equation}
\label{eq:Lambda-star}
    \Lambda^\star\;\coloneqq\;\max_{1\le t\le L-1}\Lambda_{>\ell_t}^{(t-1)}\;\in\;[1,\infty)
\end{equation}
depends on the very widths produced by the budgets, which in turn depend on $\Lambda^\star$ if it is used to set the budgets. % To break this circularity we fix an upper bound computed entirely from the \emph{original} network $F^{(0)}$.

Fix, once and for all, per-layer working radii $\rho_0^{(\ell)}\in\bigl(R,\rho^{(\ell,0)}\bigr)$ and set $\lambda^{(\ell,0)}=\log\bigl(\rho_0^{(\ell)}/R\bigr)$. With
\begin{equation}
\label{eq:Cstar-lstar}
    C^\star\;\coloneqq\;\max_{1\le\ell\le L-1}C_0^{(\ell,0)},\qquad
    \lambda^\star\;\coloneqq\;\min_{1\le\ell\le L-1}\lambda^{(\ell,0)},
\end{equation}
\REV{For $C^\star>0$, define the \emph{surrogate width} for a budget $s>0$ by}
\begin{equation}
\label{eq:Dbar-def}
    \REV{\begin{aligned}
    \overline D(s)
    &\;\coloneqq\;\binom{\din+k^\star(s)}{\din},\\
    k^\star(s)
    &\;\coloneqq\;\min\Bigl\{k\in\N_0:
        (k+1)\lambda^\star\ge\log(2C^\star/s)\\[-0.2em]
    &\hspace{10.3em}
        +\din\log(k+\din)-\log(\din!)\Bigr\}.
    \end{aligned}}
\end{equation}
Because the feasibility condition Eq.~\eqref{eq:k-cond} is monotone in $(C_0,\lambda)$ and $C^\star\ge C_0^{(\ell,t)}$, $\lambda^\star\le\lambda^{(\ell,t)}$ for every layer and step (Proposition~\ref{prop:backward}), $\overline D(s)$ upper-bounds the compressed width produced at \emph{any} step whose budget is $s$. By Eqs.~\eqref{eq:kstar-asymp}--\eqref{eq:dprime-asymp},
\begin{equation}
\label{eq:Dbar-asymp}
    \REV{\overline D(s)\;=\;O\!\left(\frac{1}{\din!}\left(\frac{\log\!\left(2C^\star/s\right)}{\lambda^\star}\right)^{\din}\right)}
    \qquad(s\to 0^+).
\end{equation}

Write $L_\sigma^{+}\coloneqq\max\{1,L_\sigma\}$ and $\Lambda_0^\star\coloneqq\prod_{j=2}^{L}\max\bigl\{1,\|W^{(j,0)}\|_{\mathrm{op},\infty}\bigr\}$, and define
\begin{equation}
\label{eq:psi-def}
    \psi(\Lambda)\;\coloneqq\;\bigl(L_\sigma^{+}\bigr)^{L-2}\,\Lambda_0^\star\,
    \overline D\!\left(\frac{\varepsilon}{(L-1)\,\Lambda}\right)^{\!L-2}.
\end{equation}

\begin{lemma}[A priori downstream bound]
\label{lem:apriori-lambda}
For every finite $L$ there exists $\overline\Lambda\ge 1$ with $\psi(\overline\Lambda)\le\overline\Lambda$. Taking $\overline\Lambda$ to be the least such value, one has, as $\varepsilon\to 0^+$ with $L$ and $\din$ fixed,
\begin{equation}
\label{eq:Lambda-bar-rate}
    \log\overline\Lambda\;=\;O\!\bigl(\din\,L\,\log\log(1/\varepsilon)\bigr).
\end{equation}
\end{lemma}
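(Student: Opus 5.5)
We have to show that the fixed-point inequality $\psi(\Lambda)\le\Lambda$ has a solution, and then control the least one. The point is that $\psi$ grows only polylogarithmically in $\Lambda$. Write $A(s)\coloneqq\log(2C^\star/s)$. By Eq.~\eqref{eq:Dbar-asymp}, together with the explicit bound $D_k\le(k+\din)^{\din}/\din!$ and the feasible choice of $k$ used in the proof of Theorem~\ref{thm:main}, there is a constant $K$, depending on $\din,\lambda^\star$, such that
\begin{equation*}
    \overline D(s)\le K\,\bigl(1+\max\{A(s),0\}\bigr)^{\din}\qquad\text{for all }s>0 .
\end{equation*}
This is a non-asymptotic version of the same estimate. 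If $A\le 0$, then $k^\star$ is bounded by a constant depending only on $\din,\lambda^\star$, which is why the bound holds for all $s$ and not just small ones.

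Substitute $s=\varepsilon/((L-1)\Lambda)$, so that $A=\log\bigl(2C^\star(L-1)/\varepsilon\bigr)+\log\Lambda\eqqcolon a_\varepsilon+\log\Lambda$. This gives
\begin{equation*}
    \psi(\Lambda)\le c_L\,\bigl(1+a_\varepsilon^{+}+\log\Lambda\bigr)^{\din(L-2)},\qquad c_L\coloneqq (L_\sigma^+)^{L-2}\Lambda_0^\star K^{L-2}.
\end{equation*}
The function $\psi$ is nondecreasing in $\Lambda$, because $\overline D$ is nonincreasing in $s$. For $\Lambda\ge1$, the right-hand side is a polynomial in $\log\Lambda$. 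Since $\Lambda=e^{\log\Lambda}$ eventually dominates any such polynomial, some finite $\Lambda\ge1$ satisfies $\psi(\Lambda)\le\Lambda$, which gives existence. The case $L=2$ is trivial: there $\psi\equiv\Lambda_0^\star$, so $\overline\Lambda=\max\{1,\Lambda_0^\star\}$ works, and the rate statement holds trivially.

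The least such $\overline\Lambda$ exists as well. The feasible set $\{\Lambda\ge1:\psi(\Lambda)\le\Lambda\}$ is nonempty and bounded below, so it has an infimum. The minimizer exists because $\overline D$ is piecewise constant and right-continuous in $\Lambda$, hence so is $\psi$, which makes the feasible set closed from the right. If one prefers to avoid this technicality, it is enough to bound any feasible point, since the rate bound only needs an upper estimate on the least one.

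For the rate, set $m\coloneqq\din(L-2)$ and $B\coloneqq 1+a_\varepsilon^{+}=\Theta(\log(1/\varepsilon))$. Take the trial value $\Lambda_1\coloneqq \exp\bigl(\gamma\, m\log B\bigr)$ with a suitable constant $\gamma$, say $\gamma=2$ plus a margin for $c_L$. Then $\log\Lambda_1=\gamma m\log B=o(B)$, so
\begin{equation*}
    \psi(\Lambda_1)\le c_L (2B)^m
\end{equation*}
for small $\varepsilon$. This is at most $B^{\gamma m}=\Lambda_1$ once $B$ is large enough, because $\gamma>1$. Hence $\overline\Lambda\le\Lambda_1$ and
\begin{equation*}
    \log\overline\Lambda\le \gamma\,\din(L-2)\log B=O\bigl(\din L\log\log(1/\varepsilon)\bigr).
\end{equation*}
The main obstacle is the first step: obtaining a clean, globally valid polylog upper bound on $\overline D(s)$ that holds for all $s$ and not only asymptotically. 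I would get it directly from the explicit feasible $k=\lceil A/\lambda^\star\rceil+\lceil \din\log(A/\lambda^\star+\din)/\lambda^\star\rceil$ for $A$ large, plus a constant bound for small $A$, rather than through the $O$-notation of Eq.~\eqref{eq:Dbar-asymp}.
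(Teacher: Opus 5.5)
Your proposal is correct and follows the paper's argument. Both bound $\log\psi(e^u)\le c+m\log(A+u)$ with $m=(L-2)\din$, then check that a trial value with $\log\Lambda=O(m\log A)$ is feasible: the paper takes $\log\Lambda=B_++m\log(2A)$, while you take $\Lambda_1=B^{\gamma m}$ with your $B=1+a_\varepsilon^{+}$ and $\gamma>1$, and your non-asymptotic bound on $\overline D$ and your treatment of the least feasible value add a little rigor. One slip: $\psi$ is left-continuous in $\Lambda$ (it is $\overline D(s)$ that is right-continuous in $s$), but continuity is not needed, since for nondecreasing $\psi$ and feasible $\Lambda_n\downarrow\Lambda^\ast$ one has $\psi(\Lambda^\ast)\le\psi(\Lambda_n)\le\Lambda_n\to\Lambda^\ast$.
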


\begin{proof}
By Eq.~\eqref{eq:Dbar-asymp}, for fixed $\varepsilon$ and $\Lambda\to\infty$,
\(
\overline D\!\bigl(\varepsilon/((L-1)\Lambda)\bigr)=O\!\bigl((\log\Lambda)^{\din}\bigr),
\)
so $\psi(\Lambda)=O\!\bigl((\log\Lambda)^{(L-2)\din}\bigr)=o(\Lambda)$. Hence $\psi(\Lambda)/\Lambda\to 0$, and there is $\overline\Lambda$ with $\psi(\overline\Lambda)\le\overline\Lambda$. (For $L=2$ the exponent is $0$ and $\psi\equiv\Lambda_0^\star$, so $\overline\Lambda=\max\{1,\Lambda_0^\star\}$ works.)

For the rate, it remains to consider $L\ge 3$. Put
\begin{equation}
    A=\log\!\left(\frac{2C^\star(L-1)}{\varepsilon}\right),
    \qquad m=(L-2)\din.
\end{equation}
By Eq.~\eqref{eq:Dbar-asymp}, there is a constant
$B=B\bigl(L,\Lambda_0^\star,L_\sigma,\lambda^\star,C^\star\bigr)$,
independent of $\varepsilon$ and $u$, such that, for all sufficiently small
$\varepsilon$ and every $u\ge 0$,
\begin{equation}
    \log\psi(e^u)\;\le\;B+m\log(A+u).
    \label{eq:bootstrap}
\end{equation}
Choose a constant $B_+\ge\max\{B,0\}$ and set
$u_A=B_++m\log(2A)$. Since $u_A\le A$ for all sufficiently large $A$,
Eq.~\eqref{eq:bootstrap} gives
\begin{equation}
    \log\psi(e^{u_A})
    \;\le\;B+m\log(A+u_A)
    \;\le\;B+m\log(2A)
    \;\le\;u_A.
\end{equation}
Thus $e^{u_A}$ is feasible, and the least feasible value satisfies
\begin{equation}
    \log\overline\Lambda
    \;\le\;u_A
    \;=\;B_++(L-2)\din\log(2A)
    \;=\;O\!\bigl(\din L\log\log(1/\varepsilon)\bigr),
\end{equation}
which proves Eq.~\eqref{eq:Lambda-bar-rate}.
\end{proof}

\paragraph{Feasibility and width.}
We now show that the uniform allocation against $\overline\Lambda$ is feasible and yields polylogarithmic width. 

\begin{proposition}[Feasibility and width of the uniform allocation]
\label{prop:uniform}
Fix $\overline\Lambda$ as in Lemma~\ref{lem:apriori-lambda} and set
\begin{equation}
\label{eq:eps-uniform}
    \varepsilon_t\;=\;\frac{\varepsilon}{(L-1)\,\overline\Lambda}\qquad(t=1,\dots,L-1).
\end{equation}
Compressing in backward order $\ell_t=L-t$ with these budgets, every realized downstream constant obeys $\Lambda_{>\ell_t}^{(t-1)}\le\overline\Lambda$. Consequently the allocation is feasible,
\begin{equation}
\label{eq:feasible}
    \sum_{t=1}^{L-1}\Lambda_{>\ell_t}^{(t-1)}\,\varepsilon_t\;\le\;\varepsilon,
\end{equation}
\REV{and the retained width at step $t$ satisfies}
\begin{equation}
\label{eq:Dt-uniform}
    \REV{\begin{aligned}
    d_{\ell_t}'
    &\le D_{k_t^\star}
    \le\overline D(\varepsilon_t),\\
    \overline D(\varepsilon_t)
    &=O\!\left(\frac{1}{\din!}\left(\frac{\log\!\bigl(2(L-1)\overline\Lambda C^\star/\varepsilon\bigr)}{\lambda^\star}\right)^{\din}\right).
    \end{aligned}}
\end{equation}
\end{proposition}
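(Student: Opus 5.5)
The core of the argument is to show, by induction along the backward order, that every realized downstream constant satisfies $\Lambda_{>\ell_t}^{(t-1)}\le\overline\Lambda$. Feasibility and the width bound then follow almost mechanically.

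\emph{Downstream bound.} At step $t$ with $\ell_t=L-t$, Lemma~\ref{lem:downstream-lip} gives
\[
\Lambda_{>\ell_t}^{(t-1)}\le L_\sigma^{L-\ell_t-1}\prod_{j=\ell_t+2}^{L}\|W^{(j,t-1)}\|_{\mathrm{op},\infty}.
\]
The matrices $W^{(j)}$ with $j\ge\ell_t+2$ have been touched only by the earlier steps $s<t$, which compressed the layers $L-s>\ell_t$. Step $s$ changes two matrices. It row-subsets $W^{(L-s)}$, which cannot increase its $\ell_\infty$ operator norm. It also reweights $W^{(L-s+1)}$, and by Corollary~\ref{cor:l1} this multiplies the maximum row $\ell_1$-norm by at most $D_{k_s^\star}$. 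That row norm is exactly $\|\cdot\|_{\mathrm{op},\infty}$, and the factor is measured relative to the value the matrix had just before step $s$.

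We then track each matrix separately. Each $W^{(j)}$ with $j\ge 2$ is reweighted at most once (at step $s=L-j+1$) and otherwise only row-subsetted. Hence
\[
\|W^{(j,t-1)}\|_{\mathrm{op},\infty}\le D_{k^\star_{L-j+1}}\max\{1,\|W^{(j,0)}\|_{\mathrm{op},\infty}\}
\]
when the reweighting has already happened, and the bound holds without the $D$ factor otherwise. The last layer $W^{(L)}$ is also reweighted once, at step $1$, and it appears in every product. Counting the reweighted matrices in the product for step $t$ gives the indices $j=L,L-1,\dots,\ell_t+2$, i.e.\ $t-1\le L-2$ factors of $D$.

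Each factor satisfies $D_{k_s^\star}\le\overline D(\varepsilon_s)=\overline D(\varepsilon/((L-1)\overline\Lambda))$. This uses Proposition~\ref{prop:backward} and the monotonicity remark after Eq.~\eqref{eq:Dbar-def}: the constants of layer $\ell_s$ at step $s$ are bounded by the original ones, $C_0\le C^\star$ and $\lambda\ge\lambda^\star$, so $k_s^\star\le k^\star(\varepsilon_s)$.

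Since $\overline D\ge1$, $L_\sigma^+\ge1$ and each $\max\{1,\cdot\}\ge1$, we may pad the exponents up to $L-2$ and extend the product over all $j\ge2$. This yields $\Lambda_{>\ell_t}^{(t-1)}\le\psi(\overline\Lambda)\le\overline\Lambda$ by the defining property in Lemma~\ref{lem:apriori-lambda}. Note that this is not circular: the budgets are fixed in advance from $\overline\Lambda$, and the $D$ factors are bounded through $\overline D$ evaluated at those fixed budgets.

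\emph{Feasibility.} Feasibility follows immediately:
\[
\sum_t\Lambda_{>\ell_t}^{(t-1)}\varepsilon_t\le(L-1)\,\overline\Lambda\,\frac{\varepsilon}{(L-1)\overline\Lambda}=\varepsilon.
\]
Combined with Theorem~\ref{thm:total-error}, this gives the total error bound.

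\emph{Width bound.} By Theorem~\ref{thm:main}, $d'_{\ell_t}=n\le D_{k_t^\star}$, and we showed $D_{k_t^\star}\le\overline D(\varepsilon_t)$ above. The asymptotic form is Eq.~\eqref{eq:Dbar-asymp} with $s=\varepsilon/((L-1)\overline\Lambda)$, which gives $\log(2C^\star/s)=\log(2(L-1)\overline\Lambda C^\star/\varepsilon)$.

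\emph{Main obstacle.} The delicate step is the careful bookkeeping of which matrices carry an inflation factor at step $t$. Two points need checking. First, Corollary~\ref{cor:l1} bounds the inflation relative to the current $V_{\max}$, and this current value equals the original one up to row-subsetting in backward order. Second, row-subsetting a matrix whose columns were previously reweighted does not increase its norm. The rest is substitution.
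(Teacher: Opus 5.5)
Your proposal is correct and follows essentially the same route as the paper's proof. You bound $D_{k_t^\star}\le\overline D(\varepsilon_t)$ via Proposition~\ref{prop:backward} and monotonicity of Eq.~\eqref{eq:k-cond}, bound each downstream $\|W^{(j,t-1)}\|_{\mathrm{op},\infty}$ by a single reweighting factor from Corollary~\ref{cor:l1} times $\max\{1,\|W^{(j,0)}\|_{\mathrm{op},\infty}\}$, pad up to $\psi(\overline\Lambda)\le\overline\Lambda$ through Lemma~\ref{lem:downstream-lip}, and sum. One small remark: your second ``obstacle'' never arises in backward order, because each $W^{(j)}$ is row-subsetted at step $L-j$ before its columns are reweighted at step $L-j+1$; the claim is true, though, and harmless.
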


\begin{proof}
Because the budgets $\varepsilon_t$ are fixed in advance, the width produced at step $t$ depends only on $\varepsilon_t$ and on the realized constants $C_0^{(\ell_t,t-1)},\lambda^{(\ell_t,t-1)}$, which by Proposition~\ref{prop:backward} (with the fixed choice $\rho_0^{(\ell_t)}$) satisfy $C_0^{(\ell_t,t-1)}\le C^\star$ and $\lambda^{(\ell_t,t-1)}\ge\lambda^\star$. Hence $D_{k_t^\star}\le\overline D(\varepsilon_t)$ for every $t$; \REV{Corollary~\ref{cor:l1} also gives $d_{\ell_t}'\le D_{k_t^\star}$, and the asymptotic in Eq.~\eqref{eq:Dt-uniform} is Eq.~\eqref{eq:Dbar-asymp} at $s=\varepsilon_t$.}

For the downstream constant, fix a step $t$ and a downstream matrix index $j\in\{\ell_t+2,\dots,L\}$. In backward order, layer $j$ is compressed at step $L-j<t$ (row-subsetting $W^{(j)}$, which does not increase its maximum row $\ell_1$-norm), and layer $j-1$ is compressed at step $L-j+1\le t-1$ (reweighting $W^{(j)}$). Corollary~\ref{cor:l1} applied at that reweighting gives
\begin{equation}
    \bigl\|W^{(j,t-1)}\bigr\|_{\mathrm{op},\infty}
    \;\le\;
    D_{k_{j-1}^\star}\,\bigl\|W^{(j,0)}\bigr\|_{\mathrm{op},\infty}
    \;\le\;
    \overline D(\varepsilon_t)\,\max\bigl\{1,\|W^{(j,0)}\|_{\mathrm{op},\infty}\bigr\},
\end{equation}
using $D_{k_{j-1}^\star}\le\overline D(\varepsilon_t)$ (the budget is uniform). Substituting into Lemma~\ref{lem:downstream-lip} and bounding $L_\sigma^{\,L-\ell_t-1}\le(L_\sigma^{+})^{L-2}$ and $L-\ell_t-1\le L-2$,
\begin{equation}
    \Lambda_{>\ell_t}^{(t-1)}
    \;\le\;
    (L_\sigma^{+})^{L-2}\,\Lambda_0^\star\,\overline D(\varepsilon_t)^{\,L-2}
    \;=\;\psi(\overline\Lambda)\;\le\;\overline\Lambda,
\end{equation}
the last step by the defining inequality of $\overline\Lambda$. Then $\sum_{t=1}^{L-1}\Lambda_{>\ell_t}^{(t-1)}\varepsilon_t\le\overline\Lambda\,(L-1)\,\varepsilon_t=\varepsilon$, which is Eq.~\eqref{eq:feasible}.
\end{proof}

\begin{remark}[Robustness of the allocation]
\label{rem:robust}
Because $D_{k_t^\star}$ depends on $\varepsilon_t$ only through $\log(1/\varepsilon_t)$, the width bound Eq.~\eqref{eq:Dt-uniform} is unchanged, up to a constant factor, for any allocation $\varepsilon_t=p_t\,\varepsilon/\overline\Lambda$ with $\sum_t p_t=1$ and $p_t\in[c_1/L,c_2/L]$ for absolute $c_1,c_2>0$. One may, for instance, spend a slightly larger budget on layers with larger $C_0^{(\ell,0)}$ or smaller $\lambda^{(\ell,0)}$ without changing the asymptotic compressed width. This allocation-insensitivity (for fixed $L$) is precisely why optimizing the split is unnecessary; the uniform choice Eq.~\eqref{eq:eps-uniform} is simply the most convenient feasible one.
\end{remark}

\paragraph{The deep theorem.}

\begin{theorem}[Polylog deep-MLP compression]
\label{thm:deep-comp}
\REV{Fix one original teacher network $F^{(0)}$ for which Assumptions~\ref{asm:holo}--\ref{asm:R-rho} hold at every hidden layer and Assumption~\ref{asm:Lsigma} holds. Compress in backward order $\ell_t=L-t$ with the uniform allocation Eq.~\eqref{eq:eps-uniform}, where $C^\star,\lambda^\star$ are the original-teacher constants in Eq.~\eqref{eq:Cstar-lstar} and $\overline\Lambda$ is given by Lemma~\ref{lem:apriori-lambda}. Then the compressed network satisfies}
\begin{equation}
\label{eq:total-bound}
    \sup_{x\in\Omega}\bigl\|\widetilde F(x)-F(x)\bigr\|_\infty\;\le\;\varepsilon,
\end{equation}
\REV{and every retained hidden width obeys}
\begin{equation}
\label{eq:deep-width}
    \REV{\begin{aligned}
    d_\ell'
    &\le
    \overline D\!\left(\frac{\varepsilon}{(L-1)\overline\Lambda}\right),\\
    \overline D\!\left(\frac{\varepsilon}{(L-1)\overline\Lambda}\right)
    &=O\!\left(\frac{1}{\din!}\left(\frac{\log\!\bigl(2(L-1)\overline\Lambda C^\star/\varepsilon\bigr)}{\lambda^\star}\right)^{\din}\right),
    \qquad \ell=1,\dots,L-1.
    \end{aligned}}
\end{equation}
\REV{Since $\log\overline\Lambda=O(\din L\log\log(1/\varepsilon))$ by Lemma~\ref{lem:apriori-lambda}, the accuracy-driven cap is $O((\log(1/\varepsilon))^{\din})$ for this fixed teacher network. }
\end{theorem}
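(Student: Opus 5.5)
The proof assembles the appendix results. There are three parts: the total error, the width cap, and the polylogarithmic asymptotics.

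\emph{Step 1 (error).} Since $\overline\Lambda$ is fixed from the original teacher, the budgets $\varepsilon_t=\varepsilon/((L-1)\overline\Lambda)$ are determined in advance, so there is no circularity. By Proposition~\ref{prop:backward}, each backward step $t$ sees constants $W_{\max}^{(\ell_t,t-1)}=W_{\max}^{(\ell_t,0)}$ and $\rho^{(\ell_t,t-1)}=\rho^{(\ell_t,0)}$, while the upstream map $f$ is untouched. Hence Assumptions~\ref{asm:holo}--\ref{asm:R-rho}, which hold for $F^{(0)}$, continue to hold at step $t$ with the same fixed $\rho_0^{(\ell_t)}$. Also $C_0^{(\ell_t,t-1)}\le C_0^{(\ell_t,0)}$. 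Therefore Theorem~\ref{thm:main} applies at every step. Theorem~\ref{thm:total-error} then bounds the total error by $\sum_t\Lambda^{(t-1)}_{>\ell_t}\varepsilon_t$, and Proposition~\ref{prop:uniform}, Eq.~\eqref{eq:feasible}, bounds this by $\varepsilon$, which gives Eq.~\eqref{eq:total-bound}.

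\emph{Step 2 (width).} Each layer is compressed exactly once. Layer $\ell$ is handled at step $t=L-\ell$, and later steps act only on smaller indices. They row-subset $W^{(\ell)}$ only when layer $\ell-1$ is compressed, and that reweights columns of $W^{(\ell)}$ without changing the number of neurons in layer $\ell$. So the final width $d'_\ell$ equals the width produced at step $t$. Eq.~\eqref{eq:Dt-uniform} then gives $d'_\ell\le D_{k_t^\star}\le\overline D(\varepsilon_t)$. This holds because the feasibility condition Eq.~\eqref{eq:k-cond} is monotone: increasing $C_0$ or decreasing $\lambda$ can only make it harder to satisfy, so $k^\star$ computed with $(C^\star,\lambda^\star)$ dominates $k^\star$ computed with the realized constants. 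Substituting $s=\varepsilon/((L-1)\overline\Lambda)$ into Eq.~\eqref{eq:Dbar-asymp} gives the asymptotic form in Eq.~\eqref{eq:deep-width}.

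\emph{Step 3 (polylog cap).} This is the only delicate point, because $\overline\Lambda$ itself depends on $\varepsilon$ and the asymptotics of Eq.~\eqref{eq:Dbar-asymp} must be applied as $s\to0^+$ uniformly. Since $\overline\Lambda\ge1$, we have $s\le\varepsilon/(L-1)\to0$, so the asymptotics are legitimate. We write
\[
\log\bigl(2(L-1)\overline\Lambda C^\star/\varepsilon\bigr)=\log(1/\varepsilon)+\log\overline\Lambda+O(1).
\]
By Lemma~\ref{lem:apriori-lambda}, $\log\overline\Lambda=O(\din L\log\log(1/\varepsilon))=o(\log(1/\varepsilon))$ for fixed $L$ and $\din$. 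The argument is therefore $(1+o(1))\log(1/\varepsilon)$, and raising it to the power $\din$ gives the cap $O((\log(1/\varepsilon))^{\din})$, with constants depending on $\lambda^\star$, $C^\star$, $L$ and $\din$.

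The main obstacle is bookkeeping rather than analysis. It consists of confirming that the hypotheses of Theorem~\ref{thm:main} hold at every intermediate $F^{(t-1)}$, which requires preserving the chosen $\rho_0^{(\ell)}$ and the admissibility of $f$, and of handling the $\varepsilon$-dependence of $\overline\Lambda$ in the final asymptotics. The degenerate case $C_0^{(\ell,0)}=0$ is covered by the zero-output replacement. The case $C^\star=0$ makes every layer trivial, so the theorem holds with zero widths.
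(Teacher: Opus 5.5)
Your proposal is correct and follows essentially the paper's own route: backward order leaves $W^{(\ell_t)}$ and the upstream map untouched, so Theorem~\ref{thm:main} applies at every step with constants bounded by $C^\star,\lambda^\star$ (Proposition~\ref{prop:backward}). Proposition~\ref{prop:uniform} then supplies feasibility and the width clause, Theorem~\ref{thm:total-error} gives the error bound, and Lemma~\ref{lem:apriori-lambda} turns the cap into $O((\log(1/\varepsilon))^{\din})$. Your extra bookkeeping makes explicit two things the paper's proof only asserts: that each layer's final width is fixed at its own step, and that $\log\overline\Lambda=o(\log(1/\varepsilon))$ justifies substituting the $\varepsilon$-dependent budget into Eq.~\eqref{eq:Dbar-asymp}. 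One small slip: compressing layer $\ell-1$ replaces $W^{(\ell)}$ by $W^{(\ell)}C^\top$, which reweights its columns rather than row-subsetting it, though your conclusion that the neuron count of layer $\ell$ is unchanged still holds.
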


\begin{proof}
Backward order leaves the upstream map of layer $\ell_t$ untouched, so Assumptions~\ref{asm:holo} and~\ref{asm:R-rho} continue to hold for $F^{(t-1)}$ at layer $\ell_t$, and Theorem~\ref{thm:main} applies at every step; Proposition~\ref{prop:backward} certifies that its constants are bounded by the original-network values, so $C^\star,\lambda^\star$ are valid uniform bounds. Proposition~\ref{prop:uniform} provides the feasibility bound Eq.~\eqref{eq:feasible}, so Theorem~\ref{thm:total-error} yields Eq.~\eqref{eq:total-bound}; \REV{its retained-width clause Eq.~\eqref{eq:Dt-uniform} gives Eq.~\eqref{eq:deep-width}.} The final estimate substitutes the rate of Lemma~\ref{lem:apriori-lambda}.
\end{proof}

Because the input and output layers are not compressed ($d_0'=\din$, $d_L'=\dout$), the total parameter count obeys
\begin{equation}
\label{eq:param-count}
    \sum_{\ell=1}^{L} d_\ell'\,d_{\ell-1}'
    \;=\;
    O\!\left(
        L\,\Bigl(\max_{1\le\ell\le L-1}d_\ell'\Bigr)^{2}
        +(\din+\dout)\,\max_{1\le\ell\le L-1}d_\ell'
    \right),
\end{equation}
\REV{For fixed $L,\din,\dout$ and the fixed teacher's constants, substituting the accuracy-driven bounds yields a polylogarithmic upper bound in $1/\varepsilon$. Equation~\eqref{eq:param-count} records the explicit dependence on $L$.}

\begin{REV*}
\section{Non-vacuity condition}
\label{app:nonvacuity}

The compression theorem is valid for every covered teacher network, but it certifies a strict width reduction only when the retained dimension is smaller than the original width. This appendix makes that finite-width condition explicit. We write $m$ for the admissible effective input dimension used by the derivative coordinates; the baseline case is $m=\din$. Define
\begin{equation}
    D_k^{(m)}\;\coloneqq\;\binom{m+k}{m}.
\end{equation}

\subsection{Matching order and realized rank}

Before the relaxation used in Eq.~\eqref{eq:k-cond}, the proof of Theorem~\ref{thm:main} gives the error bound
\begin{equation}
    \sup_{x\in\Omega}\|y(x)-\tilde y(x)\|_\infty
    \;\le\;(1+D_k^{(m)})C_0e^{-(k+1)\lambda}.
    \label{eq:nonvac-exact-bound}
\end{equation}
Accordingly, define the sharper finite-scale certification order
\begin{equation}
    k_{\mathrm{cert}}
    \;\coloneqq\;
    \min\left\{k\in\N_0:
    (1+D_k^{(m)})C_0e^{-(k+1)\lambda}\le\varepsilon_\ell
    \right\}.
    \label{eq:kcert}
\end{equation}
The set is nonempty because the exponential term eventually dominates the polynomial $D_k^{(m)}$. The theorem-level $k^\star$ instead uses $2D_k^{(m)}$, followed by the upper bound $D_k^{(m)}\le(k+m)^m/m!$, to obtain the convenient explicit condition Eq.~\eqref{eq:k-cond}. Hence, $k_{\mathrm{cert}}\le k^\star$; the two have the same asymptotic order but differ at constant factor level.

Let
\begin{equation}
    r_k\;\coloneqq\;\rank\Phi_k(W^{(\ell)})
    \;\le\;\min\{D_k^{(m)},d_\ell\}
    \label{eq:realized-rank}
\end{equation}
be the realized derivative-feature rank. The rank-basis construction retains $r_k$ neurons. Thus
\begin{equation}
    D_{k_{\mathrm{cert}}}^{(m)}<d_\ell
    \label{eq:nonvac-dimension}
\end{equation}
is an a priori sufficient condition for strict compression. Condition~\eqref{eq:nonvac-dimension} is not necessary: feature dependencies can make $r_k\ll D_k^{(m)}$. The compression error can remain safe when Eq.~\eqref{eq:nonvac-dimension} fails; what becomes vacuous is only the guaranteed width reduction.

\subsection{A closed-form sufficient threshold}

For $d_\ell\ge2$, a convenient sufficient condition for Eq.~\eqref{eq:nonvac-dimension} is
\begin{equation}
    \boxed{
    \lambda\left[(m!\,d_\ell)^{1/m}-m\right]
    \;\ge\;
    \log\!\left(\frac{(1+d_\ell)C_0}{\varepsilon_\ell}\right)}.
    \label{eq:nonvac-sufficient}
\end{equation}
To verify it, put
\begin{equation}
    a\;\coloneqq\;\frac{1}{\lambda}
    \log\!\left(\frac{(1+d_\ell)C_0}{\varepsilon_\ell}\right).
\end{equation}
If $a>0$, choose $\widehat k=\lceil a\rceil-1$. Equation~\eqref{eq:nonvac-sufficient} gives
\begin{equation}
    \widehat k+m<a+m\le(m!\,d_\ell)^{1/m},
\end{equation}
so
\begin{equation}
    D_{\widehat k}^{(m)}
    =\frac{\prod_{j=1}^m(\widehat k+j)}{m!}
    \le\frac{(\widehat k+m)^m}{m!}<d_\ell.
\end{equation}
Moreover,
\begin{equation}
    (1+D_{\widehat k}^{(m)})C_0e^{-(\widehat k+1)\lambda}
    \le(1+d_\ell)C_0e^{-\lambda\lceil a\rceil}
    \le\varepsilon_\ell.
\end{equation}
If $a\le0$, the direct choice $\widehat k=0$ has $D_0^{(m)}=1<d_\ell$ and also satisfies Eq.~\eqref{eq:nonvac-exact-bound}. Therefore, Eq.~\eqref{eq:nonvac-sufficient} indeed certifies a strictly narrower layer. It is intentionally only sufficient.

% \subsection{Representative finite-width thresholds}

% Table~\ref{tab:nonvac-thresholds} evaluates the exact proof-level criterion for the normalized illustration $C_0=\lambda=1$ and $\varepsilon_\ell=10^{-3}$. Strict dimension-count compression is certified for $d_\ell>D_{k_{\mathrm{cert}}}^{(m)}$; realized rank can lower this threshold. The rapid growth with $m$ explains why the theorem is most informative at low effective dimension. These values are illustrative, because $C_0$ and $\lambda$ are layer- and teacher-specific.

% \begin{table}[h]
%     \color[HTML]{4169E1}
%     \centering
%     \small
%     \begin{tabular}{@{}rrr@{}}
%         \toprule
%         Effective dimension $m$ & $k_{\mathrm{cert}}$ & $D_{k_{\mathrm{cert}}}^{(m)}$ \\
%         \midrule
%         1  & 9  & 10 \\
%         2  & 11 & 78 \\
%         3  & 13 & 560 \\
%         4  & 14 & 3{,}060 \\
%         5  & 16 & 20{,}349 \\
%         6  & 18 & 134{,}596 \\
%         10 & 25 & 183{,}579{,}396 \\
%         16 & 36 & 10{,}363{,}194{,}502{,}115 \\
%         \bottomrule
%     \end{tabular}
%     \caption{Proof-level non-vacuity thresholds for $C_0=\lambda=1$ and $\varepsilon_\ell=10^{-3}$. The a priori strict-compression condition is $d_\ell>D_{k_{\mathrm{cert}}}^{(m)}$.}
%     \label{tab:nonvac-thresholds}
% \end{table}
\end{REV*}

\begin{REV*}
\section{Information-theoretic perspectives}
\label{app:info-theory}

This appendix records information-theoretic evidence for the exponent in our retained-width bound.
% These arguments are deliberately separated from the theorem: they do \emph{not} prove a matching lower bound for compression of an individual teacher network. In particular, the class of functions generated by covered teachers may be much smaller than a full analytic ball, and unrestricted real-valued network parameters need not obey a finite-precision coding model.

\subsection{Kolmogorov-width}

Let $K$ be a compact set strictly inside a complex domain $D\subset\C^m$, and let
\begin{equation}
    \mathcal A_M(D)\;\coloneqq\;
    \{f\in H^\infty(D):\|f\|_{H^\infty(D)}\le M\}\big|_K
\end{equation}
be the restrictions to $K$ of a bounded holomorphic ball. For standard nested-domain regularity conditions, the Kolmogorov widths of this restriction class satisfy
\begin{equation}
    d_N\bigl(\mathcal A_M(D);C(K)\bigr)
    \;=\;M\exp\!\bigl(-\Theta(N^{1/m})\bigr),
    \label{eq:analytic-nwidth}
\end{equation}
with domain-dependent constants; see the general theory of $n$-widths \citep{pinkus1985nwidths} and sharp holomorphic restriction asymptotics \citep{bandtlow2022width}. Inverting Eq.~\eqref{eq:analytic-nwidth} gives the benchmark dimension
\begin{equation}
    N_\varepsilon
    \;=\;\Theta\!\left(\bigl[\log(M/\varepsilon)\bigr]^m\right).
    \label{eq:analytic-width-inverted}
\end{equation}
Thus the same logarithmic power as our derivative-feature count appears for a canonical analytic function class. This comparison indicates that the exponent $m$ is natural rather than a peculiarity of the rank-reduction algorithm. It is not a compression lower bound for one fixed teacher: obtaining such a result would require showing that the relevant teacher-generated class contains a comparably large analytic ball.

\subsection{Metric-entropy}

For prototypical analytic classes on nested cubes or polydisks, classical $\varepsilon$-entropy estimates have the form
\begin{equation}
    \log \mathcal N\bigl(\varepsilon,\mathcal A_M(D),\|\cdot\|_\infty\bigr)
    \;=\;\Theta\!\left(\bigl[\log(M/\varepsilon)\bigr]^{m+1}\right)
    \label{eq:analytic-entropy}
\end{equation}
under the corresponding geometric regularity conditions \citep{kolmogorov1961entropy}. A stable model with $P$ bounded-precision real parameters has, heuristically, covering entropy of order at most $P\log(1/\varepsilon)$. Comparing with Eq.~\eqref{eq:analytic-entropy} suggests $P\gtrsim[\log(1/\varepsilon)]^m$. The word \emph{heuristically} is essential: without bounded ranges, finite precision, or regularity of the parameter-to-function map, a raw parameter count is not an information-theoretic code-length bound. The entropy comparison is therefore supporting evidence, not a theorem about arbitrary neural parameterizations.

\subsection{Why no activation-uniform lower bound is claimed}

There is a complementary obstruction to a universal width lower bound. \citet{maiorov1999lower} construct a particular analytic, strictly monotone sigmoidal activation for which two-hidden-layer networks with fixed finite hidden widths are dense in continuous functions on compacta. Consequently, no lower bound based only on hidden width can hold uniformly over all analytic sigmoidal activations. A matching lower bound for the present setting would need additional restrictions---for example, a fixed nonpathological activation together with quantitative bounds on weights, precision, or parameter stability.
\end{REV*}

\begin{REV*}
\section{Consistency with classical approximation rates}
\label{app:classical-rates}

There is no contradiction between our polylogarithmic fixed-teacher statement and classical polynomial approximation rates. Our theorem compresses one fixed teacher network, for which $C_0$ is a fixed constant. Universal approximation and minimax results instead concern an accuracy-indexed family: the approximating network $\mathcal N_1(\varepsilon)$ changes as $\varepsilon$ changes. The following example shows explicitly that our finite weight range assumptions cannot hold uniformly along such a family, whereas they naturally hold for any fixed network.

\subsection{A concrete example}

Consider scalar-output bilayer MLPs. In the bias-free notation, the derivative features $\partial_x^\alpha g(0,w)$ are weighted degree-$|\alpha|$ monomials in $w$, so matching all derivatives through order $k$ reduces to matching the corresponding polynomial moments of the neuron weights. Let $\mathcal P_k$ be the linear space of polynomials in $x$ of total degree at most $k$, and define
\begin{equation}
    E_k(h)\;\coloneqq\;
    \inf_{p\in\mathcal P_k}\|h-p\|_{L^\infty(\Omega)}.
\end{equation}
For a one-dimensional teacher output $y$, the Taylor-remainder argument in the proof of Theorem~\ref{thm:step1} gives
\begin{equation}
    E_k(y)\;\le\;C_0e^{-\lambda(k+1)},
    \qquad
    C_0\;\coloneqq\;V_{\max}M_g\frac{\rho_0}{\rho_0-R},
    \qquad
    \lambda\;\coloneqq\;\log(\rho_0/R).
    \label{eq:family-geometric}
\end{equation}

Fix an integer $n\ge1$ and
\begin{equation}
    \Omega\;\coloneqq\;[-1,1]^{\din},
    \qquad R=\sqrt{\din},
    \qquad
    f(x)\;\coloneqq\;\frac{1}{n!}(x_1)_+^n,
    \label{eq:bernstein-target}
\end{equation}
where $(t)_+^n=t^n$ for $t>0$ and is zero otherwise. Three properties are relevant: $0\in\Omega$; the target lies in the unit ball of $W^{n,\infty}(\Omega)$ because its weak derivative $\partial_1^nf=\mathbf 1_{\{x_1>0\}}$ is bounded, while $f\notin C^n$; and the singular set $\{x_1=0\}$ meets the interior of $\Omega$. Bernstein's saturation theorem gives
\begin{equation}
\begin{aligned}
    E_k(f)
    &\ge
    \inf_{p\in\mathcal P_k^{(1\mathrm D)}}
    \left\|\frac{1}{n!}(t)_+^n-p(t)\right\|_{L^\infty([-1,1])}\\
    &\ge c_nk^{-n},
\end{aligned}
    \label{eq:bernstein-lower}
\end{equation}
for a Bernstein saturation constant $c_n>0$ \citep{devore1993constructive}, e.g., $c_1\approx0.298$ and $c_2\approx0.188$.

\subsection{Finding a universal approximation family \texorpdfstring{$\mathcal N_1(\varepsilon)$}{N1(epsilon)}}

\paragraph{Step 0---Assumption~\ref{asm:R-rho} pins the input weights.}
At the first hidden layer the upstream map is $f_{\mathrm{in}}=\mathrm{id}$, hence $L_f\equiv1$ and $\rho=\rho_\sigma/W_{\max}^{(1)}$. Assumption~\ref{asm:R-rho} is therefore equivalent to
\begin{equation}
    W_{\max}^{(1)}
    <\frac{\rho_\sigma}{R}
    =\frac{\rho_\sigma}{\sqrt{\din}}
    \eqqcolon W.
    \label{eq:family-input-cap}
\end{equation}

\paragraph{Step 1---the family exists.}
Because Eq.~\eqref{eq:family-input-cap} caps the input weights, we need universality inside a bounded-input-weight class. For a superanalytic activation to which their result applies, Theorem~2.6 of \citet{stinchcombe1990bounded} gives density of
\begin{equation}
    \operatorname{span}
    \bigl\{\sigma(w\cdot x+b):\|(w,b)\|_\infty\le B\bigr\}
\end{equation}
in $C(\Omega)$; Corollary~3.5 of \citet{pinkus1999approximation} sharpens this to parameters in an arbitrarily small neighborhood of the origin. Choose once and for all $W_0<W$ and such a fixed neighborhood. Then, for every $\varepsilon>0$, there is a finite network
\begin{equation}
    \mathcal N_1(\varepsilon)(x)
    =\sum_{i=1}^{q(\varepsilon)}
    v_i(\varepsilon)\,
    \sigma\!\left(w_i(\varepsilon)^\top x+b_i(\varepsilon)\right),
    \qquad
    \|w_i(\varepsilon)\|\le W_0,
    \label{eq:universal-family}
\end{equation}
such that
\begin{equation}
    \|\mathcal N_1(\varepsilon)-f\|_{L^\infty(\Omega)}\le\varepsilon;
\end{equation}
here $q(\varepsilon)$ is the number of hidden units and is free to grow with accuracy.

\paragraph{Step 2---which constants move with $\varepsilon$.}
The fixed margin $W_0<W$ makes the effective radius $\rho[\varepsilon]$ uniformly larger than $R$. We may therefore choose one $\rho_0>R$ for the whole family, so $\lambda=\log(\rho_0/R)>0$ is fixed; the bounded incoming affine parameters also give a uniform upper bound on $M_g[\varepsilon]$. The remaining free factor in Eq.~\eqref{eq:family-geometric} is
\begin{equation}
    V_{\max}[\varepsilon]
    =\sum_{i=1}^{q(\varepsilon)}|v_i(\varepsilon)|.
\end{equation}
A uniform bound on $V_{\max}[\varepsilon]$ would give $C_0=O(1)$ and hence the polylogarithmic retained width uniformly over this family. Even polynomial growth in $1/\varepsilon$ would leave $\log C_0=O(\log(1/\varepsilon))$. We next show that this family cannot have either behavior: the uniform-in-$\varepsilon$ form of Assumption~\ref{asm:B2} must fail much more decisively.

\subsection{Why \texorpdfstring{$V_{\max}$}{Vmax} must diverge}

Combining Eq.~\eqref{eq:bernstein-lower}, the approximation property of $\mathcal N_1(\varepsilon)$, and Eq.~\eqref{eq:family-geometric} gives, for all sufficiently large integers $k$,
\begin{equation}
\begin{aligned}
    c_nk^{-n}
    &\le E_k(f)\\
    &\le\|f-\mathcal N_1(\varepsilon)\|_\infty
       +E_k(\mathcal N_1(\varepsilon))\\
    &\le\varepsilon
       +C_0[\mathcal N_1(\varepsilon)]e^{-\lambda(k+1)}.
\end{aligned}
    \label{eq:bernstein-family-chain}
\end{equation}
For sufficiently small $\varepsilon$, take
\begin{equation}
    k_\varepsilon
    \;\coloneqq\;
    \left\lfloor\left(\frac{c_n}{2\varepsilon}\right)^{1/n}\right\rfloor.
\end{equation}
Then $c_nk_\varepsilon^{-n}\ge2\varepsilon$ and $k_\varepsilon+1>(c_n/(2\varepsilon))^{1/n}$. Equation~\eqref{eq:bernstein-family-chain} yields
\begin{equation}
\begin{aligned}
    C_0[\mathcal N_1(\varepsilon)]
    &\ge\varepsilon e^{\lambda(k_\varepsilon+1)}\\
    &\ge\varepsilon\exp\!\left[
       \lambda\left(\frac{c_n}{2\varepsilon}\right)^{1/n}
       \right].
\end{aligned}
    \label{eq:C0-must-grow}
\end{equation}
Consequently,
\begin{equation}
    \log C_0[\mathcal N_1(\varepsilon)]
    =\Omega(\varepsilon^{-1/n}).
\end{equation}
Every factor of $C_0$ other than $V_{\max}$ has a uniform upper bound in this construction. Hence
\begin{equation}
    \log V_{\max}[\varepsilon]
    =\Omega(\varepsilon^{-1/n}),
    \label{eq:Vmax-must-grow}
\end{equation}
so the outgoing-mass bound cannot hold uniformly along the accuracy-indexed family, although it remains finite for each fixed teacher.

\subsection{Consequence: no contradiction, but agreement}

If the derivative-matching certificate is required to approximate $\mathcal N_1(\varepsilon)$ by a compressed network $\mathcal N_2(\varepsilon)$ to error at most $\varepsilon$, its matching order must in particular make $C_0[\mathcal N_1(\varepsilon)]e^{-\lambda(k+1)}\le\varepsilon$. Equation~\eqref{eq:C0-must-grow} therefore forces
\begin{equation}
    k\ge\left(\frac{c_n}{2\varepsilon}\right)^{1/n}+O(1),
\end{equation}
and the retained derivative dimension obeys
\begin{equation}
    D_k=\binom{\din+k}{\din}
    =\Omega(\varepsilon^{-\din/n}).
    \label{eq:classical-rate-recovered}
\end{equation}
Thus the same $\varepsilon$-exponent as the classical Sobolev rate reappears once the accuracy dependence of the teacher family is included.

Finally, we remark that our Assumptions~\ref{asm:Wmax} and \ref{asm:B2}, limiting $V_{\max}, W_{\max}< \infty$, naturally stand for one fixed network. Even if a family of networks is concerned, they arise naturally due to the weight regularization terms we typically have in training.
\end{REV*}

\begin{REV*}
\section{Experimental details}
\label{app:experiments}

We use the target in Eq.~\eqref{eq:trained-target} on the Euclidean unit disk. Each teacher network is trained by unconstrained Adam on the same $20{,}000$ Sobol inputs for seeds $0,\ldots,4$. Compression is then applied to the frozen trained weights for $k=0,\ldots,10$, with no fine-tuning or seed selection after compression.
% For comparison, we record the shallow architecture $[2,1024,1]$ in Table~\ref{tab:trained-shallow}.
The architecture in Fig.~\ref{fig:trained-deep} is $[2,512,512,512,1]$, and all three hidden layers are compressed in backward order to the same $D_k=\binom{k+2}{2}$. Each frozen random initialization is an architecture- and seed-matched control.

Task error is MSE against $q$ on $150{,}000$ held-out Sobol points. Function fidelity is the largest teacher--compression discrepancy on those points or in projected local searches initialized at the largest sampled discrepancies.

For the three-hidden-layer teachers, layer 1 has $\rho_1=\pi/W_{\max}^{(1)}>1$ for every seed and therefore passes the radius check. At layer 2 we first compute the exact distance to the nearest upstream sigmoid pole,
\begin{equation}
    \rho_f^{(2)}
    =\min_j
    \frac{\sqrt{(b_j^{(1)})^2+\pi^2}}
         {\|W_{j,:}^{(1)}\|_2},
\end{equation}
and then sample the complex-input Jacobian norm. At $R=1$, define
\begin{equation}
    \widehat q_2
    =\frac{R W_{\max}^{(2)}\widehat L_{h^{(1)}}(R)}{\pi}
\end{equation}
For every seed, $\widehat q_2\ge1$, directly witnessing failure of Assumption~\ref{asm:R-rho}. More quantitatively, the first sampled violating radii for seeds $0,\ldots,4$ are $(0.300,0.287,0.248,0.290,0.261)$. Since $\widehat L_{h^{(1)}}(r)\le L_{h^{(1)}}(r)$, these give $\rho_2$ upper bounds at the same values and hence $R/\rho_2\ge(3.33,3.48,4.03,3.45,3.83)$, respectively.

\begin{table}[H]
    \centering
    \small
    \resizebox{0.94\textwidth}{!}{%
    \begin{tabular}{@{}rrrrr@{}}
        \toprule
        Seed & Teacher MSE & $\rho_1$ & $\widehat q_2$ & First task-preserving $D_k$ (reduction) \\
        \midrule
        0 & $2.3490\!\times\!10^{-4}$ & 1.2632 & 7.87  & 66 ($57.9\times$) \\
        1 & $3.6973\!\times\!10^{-4}$ & 1.2829 & 7.00  & 21 ($522.7\times$) \\
        2 & $2.4447\!\times\!10^{-4}$ & 1.0715 & 14.46 & 36 ($187.7\times$) \\
        3 & $2.3616\!\times\!10^{-4}$ & 1.2011 & 8.32  & 28 ($303.6\times$) \\
        4 & $3.0791\!\times\!10^{-4}$ & 1.0520 & 13.81 & 21 ($522.7\times$) \\
        \bottomrule
    \end{tabular}}
    \caption{Per-seed diagnostics after backward compression of all three hidden layers. The original network has $527{,}361$ parameters. Layer 1 passes the radius check, while the sampled layer-2 diagnostic fails for every seed.}
    \label{tab:trained-deep}
\end{table}

The three-hidden-layer teachers have median MSE $2.445\times10^{-4}$, median width-66 fidelity error $2.305\times10^{-3}$, and random-control median $2.60\times10^{-11}$. Test/train MSE ratios range from $0.99953$ to $0.99980$, and validation/train ratios from $0.99911$ to $1.00342$, so there is no generalization gap indicative of overfitting. Seeds 3 and 4 select step $2250$ because both training and validation errors rise at the final checkpoint; this synchronized fluctuation is optimization instability rather than overfitting.

\end{REV*}

% \newpage
% \input{checklist.tex}

\end{document}